\documentclass{article}

\usepackage{microtype}
\usepackage{graphicx}
\usepackage{subcaption}
\usepackage{colortbl}
\usepackage{xcolor} 
\usepackage{booktabs} 
\usepackage{hyperref}

\usepackage{xcolor}

\usepackage[preprint]{icml2026}

\makeatletter
\@ifundefined{algorithm}{}{%
  \let\algorithm\@undefined
  \let\endalgorithm\@undefined
}
\@ifundefined{algorithm*}{}{%
  \expandafter\let\csname algorithm*\endcsname\@undefined
  \expandafter\let\csname endalgorithm*\endcsname\@undefined
}
\@ifundefined{listofalgorithms}{}{%
  \let\listofalgorithms\@undefined
}
\makeatother

\usepackage[ruled,vlined,linesnumbered]{algorithm2e}

\usepackage{amsthm}
\usepackage{amsmath,amssymb,mathtools}
\usepackage[most]{tcolorbox}

\usepackage[capitalize,noabbrev]{cleveref}
\usepackage[table,dvipsnames]{xcolor} 
\usepackage{colortbl}
\newcommand{\gain}[1]{\fontsize{10pt}{6pt}\textbf{\selectfont\textcolor{red}{${}_{\uparrow #1}$}}}
\newcommand{\loss}[1]{\fontsize{10pt}{6pt}\selectfont\textcolor{NavyBlue}{${}_{\downarrow #1}$}}
\theoremstyle{plain}
\newtheorem{theorem}{Theorem}[section]

\newtheorem{lemma}[theorem]{Lemma}

\theoremstyle{definition}

\theoremstyle{remark}

\usepackage{multirow}
\usepackage{multicol}
\usepackage{graphicx}
\usepackage{booktabs}
\usepackage[most]{tcolorbox}
\usepackage{enumitem}
\usepackage{xcolor}
\usepackage[textsize=tiny]{todonotes}

\usepackage[most]{tcolorbox}
\usepackage{xcolor}
\definecolor{lightgray}{gray}{0.92}

\providecommand{\roleSystem}{\textbf{\textcolor{red!70!black}{[System]:}} }
\providecommand{\roleUser}{\textbf{\textcolor{blue!70!black}{[User]:}} }
\providecommand{\promptVar}[1]{\textcolor{violet}{\texttt{\{#1\}}}}

\newtcolorbox[auto counter, number within=section]{promptbox}[2]{
  colback=gray!5!white,
  colframe=gray!75!black,
  fonttitle=\bfseries,
  title={Prompt \thetcbcounter: #1}, 
  label={#2}, 
  enhanced,
  attach boxed title to top left={yshift=-2mm, xshift=2mm},
  boxed title style={colback=gray!75!black},
  breakable,     
  lines before break=25,
  left=2mm, right=2mm, top=2mm, bottom=2mm
}

\icmltitlerunning{UMEM: Unified Memory Extraction and Management Framework for Generalizable Memory}

\begin{document}

\twocolumn[
  \icmltitle{UMEM: Unified Memory Extraction and Management Framework for Generalizable Memory}



  \icmlsetsymbol{equal}{*}

  \begin{icmlauthorlist}
    \icmlauthor{Yongshi Ye}{xmu}
    \icmlauthor{Hui Jiang}{}
    \icmlauthor{Feihu Jiang}{alibaba}
    \icmlauthor{Tian Lan}{alibaba}
    \icmlauthor{Yichao Du}{alibaba}
    \icmlauthor{Biao Fu}{tongyi}
    \icmlauthor{Xiaodong Shi}{xmu}
    \icmlauthor{Qianghuai Jia}{alibaba}
    \icmlauthor{Longyue Wang}{alibaba}
    \icmlauthor{Weihua Luo}{alibaba}
  \end{icmlauthorlist}

  \icmlaffiliation{xmu}{Xiamen University}
  \icmlaffiliation{alibaba}{Alibaba International Digital Commerce}
  \icmlaffiliation{tongyi}{Tongyi Lab, Alibaba Group}

  \icmlcorrespondingauthor{Longyue Wang}{wanglongyue.wly@alibabainc.com}

  \icmlkeywords{Machine Learning, ICML}

  \vskip 0.3in
]



\printAffiliationsAndNotice{}  

\begin{abstract}
Self-evolving memory serves as the trainable parameters for Large Language Models (LLMs)-based agents, where extraction (distilling insights from experience) and management (updating the memory bank) must be tightly coordinated. 
Existing methods predominately optimize memory management while treating memory extraction as a static process, resulting in poor generalization, where agents accumulate instance-specific noise rather than robust memories.
To address this, we propose \textbf{U}nified \textbf{M}emory \textbf{E}xtraction and \textbf{M}anagement (UMEM), a self-evolving agent framework that jointly optimizes a Large Language Model to simultaneous extract and manage memories.
To mitigate overfitting to specific instances, we introduce Semantic Neighborhood Modeling and optimize the model with a neighborhood-level marginal utility reward via GRPO.
This approach ensures memory generalizability by evaluating memory utility across clusters of semantically related queries.
Extensive experiments across five benchmarks demonstrate that UMEM significantly outperforms highly competitive baselines, achieving up to a 10.67\% improvement in multi-turn interactive tasks. Futhermore, UMEM maintains a monotonic growth curve during continuous evolution. 
Codes and models will be publicly released.
\end{abstract}
\definecolor{lightpurple1}{rgb}{1.00, 1.00, 1.00}
\definecolor{lightpurple2}{rgb}{0.98, 0.98, 0.99}
\definecolor{lightpurple3}{rgb}{0.96, 0.96, 0.98}
\definecolor{lightpurple4}{rgb}{0.94, 0.93, 0.97}
\definecolor{lightpurple5}{rgb}{0.91, 0.89, 0.96}
\definecolor{lightpurple6}{rgb}{0.89, 0.85, 0.94}
\definecolor{lightpurple7}{rgb}{0.86, 0.80, 0.92}

\definecolor{lightblue1}{rgb}{1.00, 1.00, 1.00}
\definecolor{lightblue2}{rgb}{0.96, 0.99, 1.00}
\definecolor{lightblue3}{rgb}{0.92, 0.97, 1.00}
\definecolor{lightblue4}{rgb}{0.88, 0.95, 1.00}
\definecolor{lightblue5}{rgb}{0.83, 0.93, 1.00}
\definecolor{lightblue6}{rgb}{0.77, 0.90, 1.00}
\definecolor{lightblue7}{rgb}{0.70, 0.86, 1.00}  

\definecolor{tealgreen1}{rgb}{1.00, 1.00, 1.00}
\definecolor{tealgreen2}{rgb}{0.94, 0.98, 0.97}
\definecolor{tealgreen3}{rgb}{0.88, 0.97, 0.94}
\definecolor{tealgreen4}{rgb}{0.82, 0.96, 0.91}
\definecolor{tealgreen5}{rgb}{0.76, 0.95, 0.88}
\definecolor{tealgreen6}{rgb}{0.69, 0.94, 0.85}
\definecolor{tealgreen7}{rgb}{0.61, 0.91, 0.85}  

\definecolor{lightpink1}{rgb}{1.00, 0.98, 0.99}
\definecolor{lightpink2}{rgb}{1.00, 0.96, 0.98}
\definecolor{lightpink3}{rgb}{1.00, 0.94, 0.97}
\definecolor{lightpink4}{rgb}{1.00, 0.92, 0.96}
\definecolor{lightpink5}{rgb}{1.00, 0.90, 0.95}
\definecolor{lightpink6}{rgb}{1.00, 0.88, 0.94}
\definecolor{lightpink7}{rgb}{1.00, 0.85, 0.93}  

\definecolor{lightcyan1}{rgb}{1.00, 1.00, 1.00}
\definecolor{lightcyan2}{rgb}{0.97, 0.99, 0.99}
\definecolor{lightcyan3}{rgb}{0.92, 0.98, 0.98}
\definecolor{lightcyan4}{rgb}{0.84, 0.95, 0.96}
\definecolor{lightcyan5}{rgb}{0.76, 0.91, 0.94}
\definecolor{lightcyan6}{rgb}{0.68, 0.87, 0.92}
\definecolor{lightcyan7}{rgb}{0.60, 0.83, 0.90}


\definecolor{pinkpurple1}{rgb}{1.00, 1.00, 1.00}  
\definecolor{pinkpurple2}{rgb}{0.98, 0.95, 0.98}
\definecolor{pinkpurple3}{rgb}{0.96, 0.90, 0.96}
\definecolor{pinkpurple4}{rgb}{0.93, 0.85, 0.94}
\definecolor{pinkpurple5}{rgb}{0.89, 0.77, 0.91}
\definecolor{pinkpurple6}{rgb}{0.85, 0.69, 0.88}
\definecolor{pinkpurple7}{rgb}{0.80, 0.60, 0.85}

\definecolor{peach1}{rgb}{1.00, 0.99, 0.98}
\definecolor{peach2}{rgb}{1.00, 0.97, 0.95}
\definecolor{peach3}{rgb}{1.00, 0.95, 0.92}
\definecolor{peach4}{rgb}{1.00, 0.93, 0.89}
\definecolor{peach5}{rgb}{1.00, 0.90, 0.86}
\definecolor{peach6}{rgb}{1.00, 0.87, 0.83}
\definecolor{peach7}{rgb}{1.00, 0.84, 0.80} 

\definecolor{pinkgrad1}{RGB}{255, 255, 255}  
\definecolor{pinkgrad2}{RGB}{254, 248, 250}  
\definecolor{pinkgrad3}{RGB}{254, 241, 245}  
\definecolor{pinkgrad4}{RGB}{253, 234, 240}  
\definecolor{pinkgrad5}{RGB}{253, 226, 235}  
\definecolor{pinkgrad6}{RGB}{252, 218, 230}  
\definecolor{pinkgrad7}{RGB}{250, 202, 220}  

\section{Introduction}
\begin{figure*}[th]  
    \centering
    \includegraphics[width=0.92\textwidth]{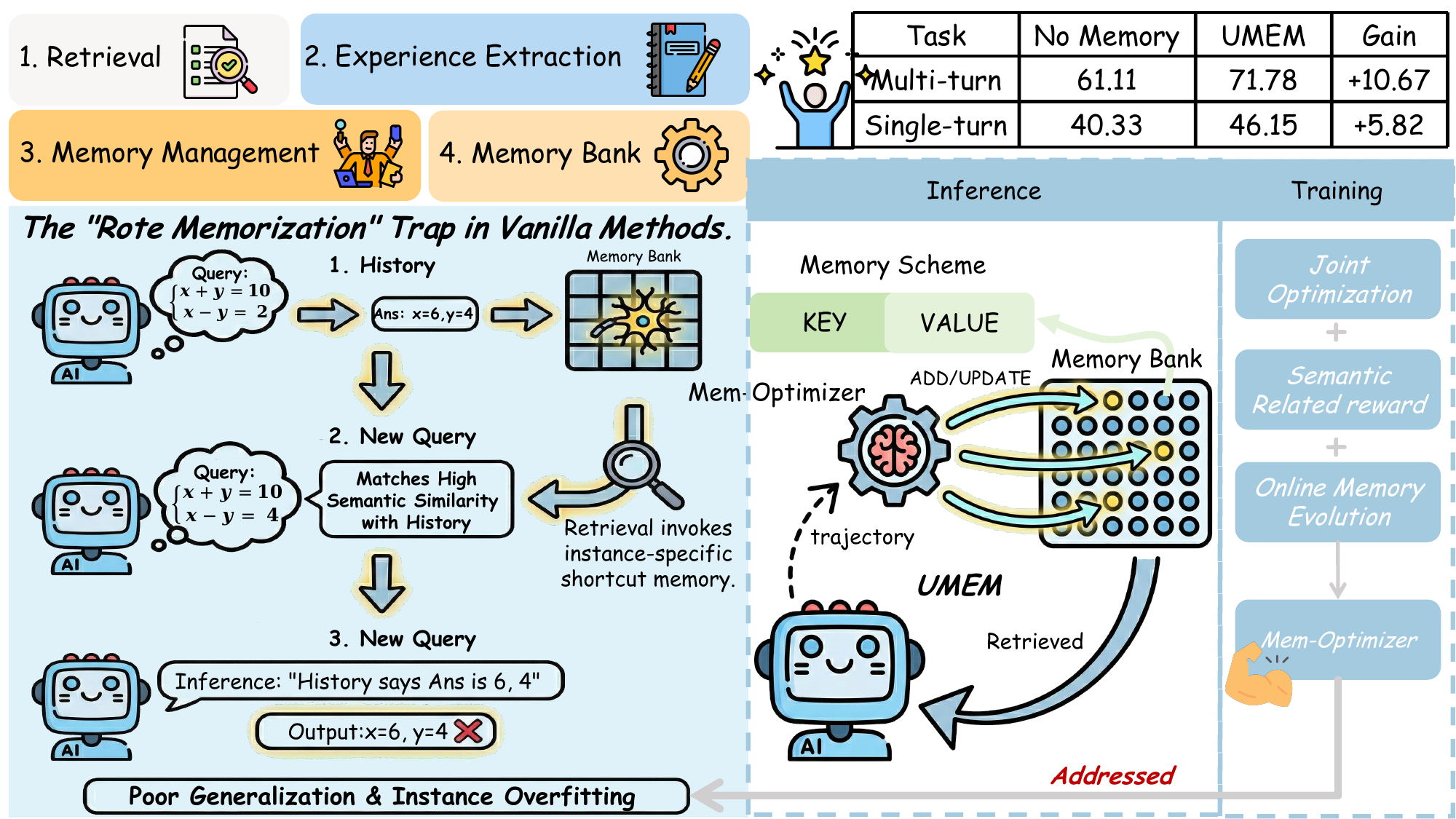}
    \caption{Comparison between the conventional memory pipeline and our proposed UMEM framework. \textbf{Left}: Vanilla methods suffer from the "Rote Memorization" trap, overfitting to instance-specific noise. \textbf{Right}: UMEM utilizes a learnable Mem-Optimizer to jointly optimize extraction and management. This distills generalizable principles, ensuring robust performance and avoiding noise accumulation.}
    \label{fig:overview}
\end{figure*}

Self-evolution is a fundamental capability for agents operating in dynamic, open-ended environments~\cite{zhang2026memrlselfevolvingagentsruntime}. While Large Language Models (LLMs) serve as powerful backbones for agents, their parameters typically remain frozen after deployment, limiting their ability to learn from continuous interactions. To overcome this limitation, long-term memory serves as trainable parameters of agents that can be updated from online experience~\cite{cai2025flexcontinuousagentevolution,cai2025trainingfreegrouprelativepolicy,ouyang2025reasoningbankscalingagentselfevolving,wei2025evomemorybenchmarkingllmagent}. 
Conceptually, a self-evolving agent system mirrors the neural network optimization~\cite{Rumelhart1986LearningRB,cai2025flexcontinuousagentevolution,ouyang2025reasoningbankscalingagentselfevolving}: 
(1) Forward Pass: the frozen agent executes a task given retrieved memories from memory bank; and
(2) Backward Optimization: a memory optimizer extracts insights (memories) from the experience and consolidate them into the memory bank~\cite{xu2025amemagenticmemoryllm,yan2026memoryr1enhancinglargelanguage}.
Therefore, the bottleneck of the self-evolving agent lies in the capability of this memory optimizer.


While numerous works have improved the memory optimizer, they predominantly focus on memory management, treating extraction as a static process via prompting off-the-shelf LLMs~\cite{wu2025evolverselfevolvingllmagents,yan2026memoryr1enhancinglargelanguage,fang2026mempexploringagentprocedural}, without optimizing for explicit generalization.
Consequently, self-evolving agents suffer from two critical problems:
(1) \textbf{Accumulation of Instance-Specific Noise}: As shown in Figure~\ref{fig:overview}, static memory extraction blindly retains instance-specific details rather than generalizable principles~\cite{qin2024o1replicationjourneystrategic}, causing progressive memory pollution and poor generalization; 
(2) \textbf{Management Misalignment}: The extracted memories are often inconsistent with the corresponding management policy, rendering even an optimal management policy ineffective. 
Therefore, even an well-optimized management policy cannot compensate for low-quality extracted memories, undermining both task performance and cross-task generalization of the self-evolving agents.


To bridge this gap, we propose \textbf{Unified Memory Extraction and Management (UMEM)}, a self-evolving agent framework that jointly optimizes the memory extraction and management capability of memory optimizer.
Structurally, UMEM consists of three primary components: a frozen Agent Executor (inference engine), a Memory Bank (the external parameters of self-evolving agents), and a learned memory optimizer (Mem-Optimizer).
The Mem-Optimizer stands as the core of our proposed UMEM framework, designed to evolve the memory bank by extracting reusable memories from executor's experience.
Crucially, to address the instance-specific noise, we introduce the Semantic Neighborhood Modeling, which constructs clusters of semantically related queries to simulate cross-task variations, and design a Marginal Utility Reward to guide the optimization process.
By maximizing this reward via Group Relative Policy Optimization (GRPO), Mem-Optimizer performs end-to-end joint optimization. This guarantees that extracted memories are not only generalizable but also intrinsically aligned with the management policy.
Besides, we implement Online Memory Evolution, where the memory bank is dynamically updated with optimal rollouts during training, forcing the agent to learn how to utilize a continuously refining memory system.
Ultimately, the trained Mem-Optimizer significantly enhancing the cross-task generalization capability of agents.

Extensive experiments across five benchmarks demonstrate that UMEM significantly outperforms highly competitive baselines like ReMem and Memp
on single-turn reasoning tasks.
Notably, ablation studies demonstrate that optimizing memory management in isolation leads to significant performance degradation, empirically validating the necessity of jointly optimizing memory extraction and management.
Further analysis confirms that Semantic Neighborhood Modeling and the Marginal Utility Reward Function effectively empower the Mem-Optimizer to distill generalizable memories from individual experiences, rather than merely memorizing instance-specific shortcuts.
Finally, results of test-time scaling evolution prove that UMEM enables agents to achieve robust and stable self-evolution, maintaining a consistent performance gain and widening the performance gap compared to baselines as interactions proceed.
These designs ensure our proposed UMEM could effectively transform interaction experience into helpful insights, paving the way for truly self-evolving agents.

\section{Related Work}

\textbf{From Parametric Memory to Non-Parametric Memory.}
Researches on memory-augmented language models have spanned from early architectural mechanisms~\cite{weston2015memorynetworks,borgeaud2022improvinglanguagemodelsretrieving} to recent scalable lookup frameworks~\cite{lan2023copyneed,cheng2026conditionalmemoryscalablelookup}.
However, these approaches necessitate computationally fine-tuning costs.
Recently, the community has converged on a non-parametric paradigm: treating external memory bank as the agent's evolvable parameters~\cite{wei2025evomemorybenchmarkingllmagent,cai2025flexcontinuousagentevolution,cai2025trainingfreegrouprelativepolicy}. 

\textbf{Self-Evolving Memory without Optimization.}
The effectiveness of non-parametric evolution hinges on \textit{how} experiences are represented. Initial attempts, such as Synapse~\cite{zheng2024synapsetrajectoryasexemplarpromptingmemory}, retrieved raw historical trajectories. However, this approach suffers from severe noise and context window inefficiencies. To distill clearer signals, subsequent works introduced structured abstraction. For example, $Mem^p$~\cite{fang2026mempexploringagentprocedural} converts trajectories into executable programs. ReasoningBank~\cite{ouyang2025reasoningbankscalingagentselfevolving} summarizes success and failure trajectories into reusable memory entries. SimpleMem~\cite{liu2026simplememefficientlifelongmemory} applies semantic compression. 
However, the memory extraction and management policy of these methods mainly rely on prompting LLMs or hand-crafted rules, preventing the further improvement of extraction and management capability.

\textbf{Self-Evolving Memory with Optimization.}
Recent research integrates optimization, like Reinforcement Learning (RL), into self-evolving agents, branching into two distinct streams:
(1) \textbf{Optimizing Working Memory or Short-term Memory}:
Approaches such as DeepAgent~\cite{deepagent}, MemAgent~\cite{xu2025amemagenticmemoryllm} and Mem-$\alpha$~\cite{wang2025memalphalearningmemoryconstruction} employ RL to manage working memory or short-term memory~\cite{jiang2025longtermmemoryfoundation}. 
While effective for handling long-context inputs, they do not construct a evolvable memory bank, which falls outside the scope of our comparison;
(2) \textbf{Optimizing Long-term Memory}: 
This stream aims to enhance the memory management capabilities of agents, exemplified by MemRL~\cite{zhang2026memrlselfevolvingagentsruntime} and EvolveR~\cite{wu2025evolverselfevolvingllmagents}. 
Existing works exhibit a critical limitation: they predominantly optimize memory selection and management while treating memory extraction as a static process~\cite{yan2026memoryr1enhancinglargelanguage}. 
Furthermore, they lack explicit mechanisms to model generalization across future queries, often resulting in the accumulation of low-quality, instance-specific noise.
In contrast, we propose the UMEM framework to jointly optimize memory extraction and management policy, ensuring that evolved memories are generalizable and aligned with future reuse.
\section{Task Formulation of Self-Evolving Agents}
\label{sec:formulation}

Self-evolving agent can be treated as a parametric system where the executor $\mathcal{E}$ (parameters $\Theta_0$) are frozen, and the external memory bank $\mathcal{B}$ serves as the evolvable, non-differentiable parameters, consisting of a set of key--value pairs $\mathcal{B}=\{(k_i, v_i)\}_{i=1}^{|\mathcal{B}|}$, where keys correspond to queries and values store the associated memory content.
In our proposed UMEM, the self-evolving process of agents is conceptualized as analogous to a network optimization process, comprising a forward pass for inference and a backward optimization for memory evolution.

\noindent\textbf{Feedforward Pass (Memory-Augmented Execution).} 
At time $t$, given a query $q$, the agent retrieves the Top-K relevant memory entries $\mathcal{B}_t^{topk}\in \mathcal{B}_t$. Then, the frozen executor $\mathcal{E}$ performs inference conditioned on this context to generate a complete trajectory $\tau_q$ and prediction $\hat{y}_t$:
$$\tau_q, \hat{y}_q \leftarrow \mathcal{E}(q, \mathcal{B}_t^{topk}; \Theta_0)$$
Here, since $\Theta_0$ is fixed, the system's performance is strictly bounded by the quality of the retrieved memory $\mathcal{B}_t^{topk}$.

\noindent\textbf{Backward Pass (Memory Bank Update).} 
The key to the self-evolving memory is to optimize memory bank $\mathcal{B}$. 
Since $\Theta_0$ is fixed, the system's performance is strictly bounded by the quality of the memory bank $\mathcal{B}_t$.
Analogous to a backward optimization process, a Memory Optimizer model (Mem-Optimizer), parameterized by $\phi$, extract memory entries (distills insights) from the trajectory $\tau_q$, and samples a pre-defined memory management operation $opt_q \in \{\texttt{ADD}, \texttt{UPDATE}, ...\}$:
$$a_q=(\Delta_q, opt_q) \sim \pi_\phi(\cdot \mid q, \tau_q, \hat{y_q})$$
where $\Delta_q$ is the extracted memory and $a_q$ represents the action to the memory bank.
The memory bank evolves after applying the action: $\mathcal{B}_{t+1} \leftarrow \text{Apply}(\mathcal{B}_t, a_q)$.
Note that while we formulate the input as the current trajectory $\tau_q$, this representation is generic; it can easily extend to extracting insights from pairs of successful or failed trajectories~\cite{ouyang2025reasoningbankscalingagentselfevolving}.

In conclusion, identifying the Mem-Optimizer ($\pi_\phi$) as the core bottleneck~\cite{zhang2026memrlselfevolvingagentsruntime,fang2026mempexploringagentprocedural,cai2025flexcontinuousagentevolution}, we propose the UMEM framework to jointly optimize its extraction and management policies.

\section{Method}
\begin{figure*}[th]  
    \centering
    \includegraphics[width=\textwidth]{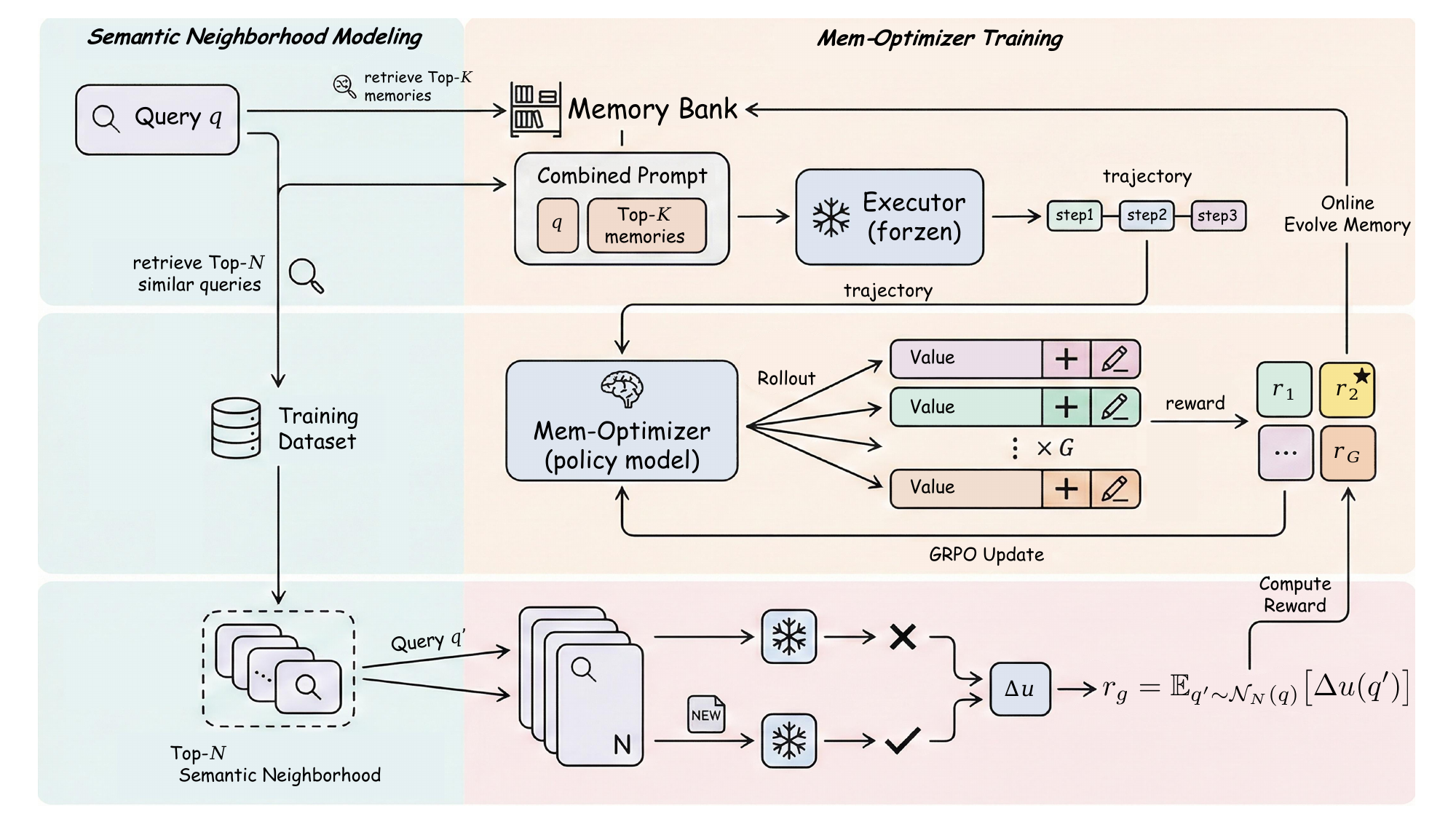}
    \caption{Overview of UMEM. \textbf{Left}: Semantic Neighborhood Modeling retrieves related queries to simulate cross-task variations. \textbf{Right}: The Mem-Optimizer distills trajectories from the frozen Executor into memory updates, which are optimized via GRPO. The process is guided by a Marginal Utility Reward that measures performance gains across the entire neighborhood to ensure generalization.}
    \label{fig:method}
\end{figure*}

This section describes our proposed UMEM framework.
To ensure generalization, we first propose Semantic Neighborhood Modeling (Section~\ref{sec:neighborhood_modeling}), which constructs query clusters to prevent overfitting. Besides, we design the Marginal Utility Reward and apply GRPO algorithm to enforce cross-task generalization (Section~\ref{sec:mem_optimizer_training}).

\subsection{Semantic Neighborhood Modeling}
\label{sec:neighborhood_modeling}

A critical risk in memory evolution is \textit{overfitting}: an extracted insight may perfectly resolve the current query but fail to generalize to related queries due to instance-specific noise or shortcuts~\cite{qin2024o1replicationjourneystrategic}.
To mitigate this, we introduce Semantic Neighborhood Modeling.
Our core insight is to treat the local cluster of similar queries as a proxy to approximate cross-task variations.
Specifically, we first project all queries into a shared semantic space using a pre-trained encoder (e.g., BGE-M3~\cite{chen-etal-2024-m3}).
For a given source query $q$, we construct its semantic neighborhood $\mathcal{N}_N(q)$ by retrieving the Top-$N$ nearest neighbors from the corpus $\mathcal{D}$ based on cosine similarity.
During training, we evaluate candidate memory updates not on the current $q$, but over the entire neighborhood $\mathcal{N}_N(q)$.
This mechanism forces the Mem-Optimizer to discard instance-specific details and extract generalizable insights.

\subsection{Mem-Optimizer Training via GRPO\label{sec:mem_optimizer_training}}

The training process of Mem-Optimizer comprising following stages: (1) Memory-Augmented Execution; (2) Mem-Optimizer Policy Rollout; (3) Marginal Utility Reward; (4) Optimization via GRPO; and (5) Online Memory Evolution. The detailed procedural flow are provided in Appendix \ref{app:mem_mgmt}.

\noindent\textbf{Memory-Augmented Execution.}
As described in Section~\ref{sec:formulation}, for each query in training dataset $q\in \mathcal{Q}$ at training step $t$, we retrieve the Top-$K$ relevant memory entries $\mathcal{B}_t^{topk}$ from the current memory bank.
The frozen executor then generates a trajectory $\tau_q$ and prediction $\hat{y_q}$. 

\noindent\textbf{Mem-Optimizer Policy Rollout.}
As shown in middle of the right panel of Figure~\ref{fig:method}, the Mem-Optimizer distills the $\tau_q$ into structured memory action $(\Delta_q, opt_q)$ (values).
Adopting the GRPO algorithm~\cite{shao2024deepseekmath}, we sample a group of $G$ memory update actions $\{a_q^{(g)}\}_{g=1}^G$:
\begin{equation}
    \{a_q^{(g)}\}_{g=1}^G \sim \pi_{\phi}(\cdot \mid q, \tau_q, \mathcal{B}_t^{topk})
\end{equation}

\noindent\textbf{Marginal Utility Reward.}
To evaluate the quality of the generated memory update actions $\{a_q^{(g)}\}_{g=1}^G$, we strictly prohibit overfitting to the single source query $q$.
Instead, we validate the memory update against the Semantic Neighborhood $\mathcal{N}_N(q)$.
For each neighbor query $q' \in \mathcal{N}_N(q)$, we compute the per-neighbor utility $\Delta u(q')$ by comparing two execution states:
a reference execution without $a_q^{(g)}$ and a memory-augmented execution where $a_q^{(g)}$ is used.
The marginal utility is then computed by sum of two terms:
(1) \textbf{Success Gain ($\mathcal{G}_{\text{succ}}$):}  It quantifies the correction of execution failures:
\begin{equation}
    \mathcal{G}_{\text{succ}}(q') = c(\tau_{q'}^{\text{mem}}) - c(\tau_{q'}^{\text{ref}})
\end{equation}
where subscripts $c(\tau_{q'}^{\text{mem}}),c(\tau_{q'}^{\text{ref}})\in\{0,1\}$ denote the correctness of the augmented and reference execution trajectories, respectively.
A positive $\mathcal{G}_{\text{succ}}$ indicates that the memory successfully fixed a previously incorrect query, while a negative value penalizes memory that introduces errors into originally correct reasoning;
(2) \textbf{Efficiency Regularization ($\mathcal{R}_{\text{eff}}$):}
Beyond correctness, high-quality memory should facilitate more efficient inference, pruning redundant and wrong reasoning steps~\cite{ahmed2025retrievalofthoughtefficientreasoningreusing,didolkar2025metacognitivereuseturningrecurring}.
To encourage concise reasoning, we introduce an Efficiency Regularization term that rewards token reduction, but \textbf{only} when correctness is preserved:
\begin{equation}
    \mathcal{R}_{\text{eff}}(q') = (c_{\text{mem}} \cdot c_{\text{ref}}) \cdot \left( 1 - \frac{\ell_{\text{mem}}^{(g)}}{\ell_{\text{ref}}} \right)
\end{equation}
Here, $\ell$ represents the length of the generated trajectory.
The gating term $(c_{\text{mem}} \cdot c_{\text{ref}})$ ensures that we do not reward brevity if it comes at the cost of accuracy (e.g., generating a short but wrong answer).
The marginal utility reward is then defined as the sum of these two reward scores:
\begin{equation}
\Delta u(q') = \mathcal{G}_{\text{succ}}(q') + \mathcal{R}_{\text{eff}}(q').
\end{equation}
The final marginal utility reward for a candidate memory is the average marginal utility over the neighborhood,
\[
r_g = \mathbb{E}_{q' \sim \mathcal{N}_N(q)} \big[ \Delta u(q') \big],
\]
which favors memory updates that both correct errors of semantically related queries.

\noindent\textbf{Optimization via GRPO.}
Finally, we train $\pi_{\phi}$ to maximize a joint objective $r_{\text{final}} = r_{\text{fmt}} + r_g$, where $r_{\text{fmt}} \in \{0, 1\}$ is a binary format reward that validates if the output format of extracted memories and management operations strictly adheres to the XML schema defined in Appendix~\ref{app:memory_template}.

\noindent\textbf{Online Memory Evolution.}
After GRPO optimization of one query $q$, we identify the memory update action $a_q^{(g)}$ with the highest marginal utility reward  and immediately apply it to the memory bank: $\mathcal{B}_{t+1} \leftarrow \text{Apply}(\mathcal{B}_t, a_q)$. 
This mechanism ensures that the memory bank is dynamically refined throughout the training process, forcing the agent to learn how to utilize and manage an evolving memory rather than a static one.

\section{Experiments}

\subsection{Setup} 
\noindent\textbf{Datasets.}
We derive our training data from the MMLU dataset \cite{hendrycks2021measuringmassivemultitasklanguage}. Specifically, we randomly sample $\sim$2,000 queries from the training split. For each query $q$, a semantic neighborhood cluster $\mathcal{N}_N(q)$ is constructed by retrieving the Top-$N$ ($N=3$) most similar samples within the training set.

\noindent\textbf{Backbone.} We employ Llama-3.2-1B-Instruct and Qwen3-4B-Instruct as the Mem-Optimizer policy $\pi_\phi$. During training (Details are in Appendix \ref{appendix:implementation_details}), Qwen3-8B serves as the frozen executor $\mathcal{E}$ to generate trajectories. To evaluate cross-model portability, we deploy the Mem-Optimizer to curate memory for diverse unseen executors, including GPT-5.1, Qwen3-8B, and Gemini-2.5-Flash. This setup assesses whether UMEM distills architectural-agnostic insights that generalize to heterogeneous and stronger models.

\begin{table*}[t]
\centering
\caption{Main Results. We evaluate UMEM using three distinct frozen executors: Qwen3-8B-Thinking, GPT-5.1, and Gemini-2.5-Flash. Performance \textcolor{BrickRed}{gains ($\uparrow$)} and \textcolor{NavyBlue}{drops ($\downarrow$)} of UMEM compared to its direct backbone are explicitly marked.}
\resizebox{\textwidth}{!}{
\begin{tabular}{lcccccccc}
\toprule[1.2pt]
\multirow{2.5}{*}{\textbf{Models}} & \multirow{2.5}{*}{\textbf{Parameters}} & \multirow{2.5}{*}{\textbf{AIME}} & \multirow{2.5}{*}{\textbf{GPQA}} & \multirow{2.5}{*}{\textbf{HLE}} & \multirow{2.5}{*}{\textbf{HotpotQA}} & \multicolumn{2}{c}{\textbf{ALFWorld}} & \multirow{2.5}{*}{\textbf{Average}} \\
\cmidrule(lr){7-8}
 & &  &  &  &  & SR & PR &  \\
\midrule
\multicolumn{9}{l}{\textit{Frozen Executor: Qwen3-8B-Thinking}} \\
\midrule
No Memory & - & \cellcolor{lightblue3}{51.67} & \cellcolor{lightblue4}{52.53} & \cellcolor{lightblue3}{7.51} & \cellcolor{lightblue6}{62.00} & \cellcolor{lightblue3}{41.04} & \cellcolor{lightblue3}{68.91} & \cellcolor{lightblue3}{47.28} \\
Self-RAG & 8B & \cellcolor{lightblue1}{30.00} & \cellcolor{lightblue1}{43.94} & \cellcolor{lightblue5}{8.56} & \cellcolor{lightblue2}{25.00} & \cellcolor{lightblue1}{30.60} & \cellcolor{lightblue1}{54.23} & \cellcolor{lightblue1}{32.06} \\
ReMem & 8B & \cellcolor{lightblue7}{\textbf{61.67}} & \cellcolor{lightblue5}{53.54} & \cellcolor{lightblue1}{6.42} & \cellcolor{lightblue1}{15.00} & \cellcolor{lightblue5}{46.27} & \cellcolor{lightblue2}{66.17} & \cellcolor{lightblue1}{41.51} \\
Memp & 8B & \cellcolor{lightblue2}{46.67} & \cellcolor{lightblue2}{49.49} & \cellcolor{lightblue7}{\textbf{11.22}} & \cellcolor{lightblue6}{62.00} & \cellcolor{lightblue4}{44.78} & \cellcolor{lightblue4}{69.78} & \cellcolor{lightblue4}{47.32} \\

Llama-3.2-1B-Instruct & 1B & \cellcolor{lightblue3}{51.67} & \cellcolor{lightblue3}{52.02} & \cellcolor{lightblue1}{6.42} & \cellcolor{lightblue3}{59.00} & \cellcolor{lightblue6}{47.01} & \cellcolor{lightblue7}{74.13} & \cellcolor{lightblue5}{48.38} \\

\textbf{UMEM-Llama-3.2-1B} (Ours) & 1B & \cellcolor{lightblue6}{60.00\gain{8.3}} & \cellcolor{lightblue7}{\textbf{54.04}\gain{2.0}} & \cellcolor{lightblue2}{6.95\gain{0.5}} & \cellcolor{lightblue5}{61.00\gain{2}} & \cellcolor{lightblue4}{44.78\loss{2.2}} & \cellcolor{lightblue5}{72.14\loss{2.0}} & \cellcolor{lightblue6}{49.82\gain{1.4}} \\

Qwen3-4B-Instruct & 4B & \cellcolor{lightblue4}{55.00} & \cellcolor{lightblue5}{53.54} & \cellcolor{lightblue2}{6.95} & \cellcolor{lightblue4}{60.00} & \cellcolor{lightblue2}{40.30} & \cellcolor{lightblue1}{65.92} & \cellcolor{lightblue2}{46.95} \\

\textbf{UMEM-Qwen3-4B} (Ours) & 4B & \cellcolor{lightblue5}{58.33\gain{3.3}} & \cellcolor{lightblue3}{52.02\loss{1.5}} & \cellcolor{lightblue4}{8.02\gain{1.1}} & \cellcolor{lightblue7}{\textbf{63.00}\gain{3}} & \cellcolor{lightblue7}{\textbf{50.75}\gain{10.5}} & \cellcolor{lightblue6}{\textbf{73.13}\gain{7.2}} & \cellcolor{lightblue7}{\textbf{50.88}\gain{3.9}} \\
\midrule
\multicolumn{9}{l}{\textit{Frozen Executor: GPT-5.1}} \\
\midrule
No Memory & - & \cellcolor{lightblue2}{40.00} & \cellcolor{lightblue1}{57.57} & \cellcolor{lightblue1}{6.95} & \cellcolor{lightblue1}{39.00} & \cellcolor{lightblue1}{61.94} & \cellcolor{lightblue1}{66.67} & \cellcolor{lightblue1}{45.36} \\
Self-RAG & \texttt{API} & \cellcolor{lightblue6}{50.00} & \cellcolor{lightblue2}{57.58} & \cellcolor{lightblue2}{7.49} & \cellcolor{lightblue2}{42.00} & \cellcolor{lightblue3}{70.90} & \cellcolor{lightblue6}{83.71} & \cellcolor{lightblue4}{51.95} \\
ReMem & \texttt{API} & \cellcolor{lightblue1}{30.00} & \cellcolor{lightblue5}{62.63} & \cellcolor{lightblue5}{8.56} & \cellcolor{lightblue3}{43.00} & \cellcolor{lightblue4}{73.13} & \cellcolor{lightblue4}{79.60} & \cellcolor{lightblue1}{49.49} \\
Memp & \texttt{API} & \cellcolor{lightblue4}{45.00} & \cellcolor{lightblue4}{62.12} & \cellcolor{lightblue7}{\textbf{10.16}} & \cellcolor{lightblue5}{52.00} & \cellcolor{lightblue5}{77.61} & \cellcolor{lightblue5}{81.34} & \cellcolor{lightblue6}{54.71} \\
Llama-3.2-1B-Instruct & 1B & \cellcolor{lightblue3}{43.33} & \cellcolor{lightblue3}{61.11} & \cellcolor{lightblue2}{7.49} & \cellcolor{lightblue4}{51.00} & \cellcolor{lightblue1}{61.94} & \cellcolor{lightblue1}{73.63} & \cellcolor{lightblue2}{49.75} \\
\textbf{UMEM-Llama-3.2-1B} (Ours) & 1B & \cellcolor{lightblue4}{45.00\gain{1.7}} & \cellcolor{lightblue5}{62.63\gain{1.5}} & \cellcolor{lightblue5}{8.56\gain{1.1}} & \cellcolor{lightblue7}{\textbf{55.00}\gain{4}} & \cellcolor{lightblue2}{64.18\gain{2.2}} & \cellcolor{lightblue2}{75.37\gain{1.7}} & \cellcolor{lightblue3}{51.79\gain{2.0}} \\
Qwen3-4B-Instruct & 4B & \cellcolor{lightblue5}{46.67} & \cellcolor{lightblue5}{62.63} & \cellcolor{lightblue4}{8.02} & \cellcolor{lightblue5}{52.00} & \cellcolor{lightblue3}{70.90} & \cellcolor{lightblue3}{78.86} & \cellcolor{lightblue5}{53.18} \\
\textbf{UMEM-Qwen3-4B} (Ours) & 4B & \cellcolor{lightblue7}{\textbf{51.67}\gain{5.0}} & \cellcolor{lightblue7}{\textbf{65.15}\gain{2.5}} & \cellcolor{lightblue5}{8.56\gain{0.5}} & \cellcolor{lightblue6}{54.00\gain{2.0}} & \cellcolor{lightblue7}{\textbf{82.84}\gain{11.9}} & \cellcolor{lightblue7}{\textbf{84.20}\gain{5.3}} & \cellcolor{lightblue7}{\textbf{57.74}\gain{4.6}} \\
\midrule
\multicolumn{9}{l}{\textit{Frozen Executor: Gemini-2.5-Flash}} \\
\midrule
No Memory & - & \cellcolor{lightblue2}{53.33} & \cellcolor{lightblue4}{73.23} & \cellcolor{lightblue3}{10.16} & \cellcolor{lightblue1}{30.00} & \cellcolor{lightblue2}{55.22} & \cellcolor{lightblue2}{72.89} & \cellcolor{lightblue1}{49.14} \\
Self-RAG & \texttt{API} & \cellcolor{lightblue4}{56.67} & \cellcolor{lightblue2}{71.72} & \cellcolor{lightblue3}{10.16} & \cellcolor{lightblue5}{42.00} & \cellcolor{lightblue5}{59.70} & \cellcolor{lightblue3}{74.50} & \cellcolor{lightblue5}{52.46} \\
ReMem & \texttt{API} & \cellcolor{lightblue4}{56.67} & \cellcolor{lightblue1}{70.20} & \cellcolor{lightblue4}{10.70} & \cellcolor{lightblue2}{36.00} & \cellcolor{lightblue3}{56.72} & \cellcolor{lightblue4}{75.62} & \cellcolor{lightblue3}{50.99} \\
Memp & \texttt{API} & \cellcolor{lightblue2}{53.33} & \cellcolor{lightblue5}{74.75} & \cellcolor{lightblue1}{7.49} & \cellcolor{lightblue4}{41.00} & \cellcolor{lightblue6}{60.45} & \cellcolor{lightblue5}{76.74} & \cellcolor{lightblue4}{52.29} \\
Llama-3.2-1B-Instruct & 1B & \cellcolor{lightblue1}{51.67} & \cellcolor{lightblue4}{73.23} & \cellcolor{lightblue3}{10.16} & \cellcolor{lightblue5}{42.00} & \cellcolor{lightblue1}{53.73} & \cellcolor{lightblue1}{71.27} & \cellcolor{lightblue1}{50.34} \\
\textbf{UMEM-Llama-3.2-1B} (Ours) & 1B & \cellcolor{lightblue5}{58.33\gain{6.7}} & \cellcolor{lightblue2}{71.72\loss{1.5}} & \cellcolor{lightblue7}{\textbf{13.37}\gain{3.2}} & \cellcolor{lightblue5}{42.00} & \cellcolor{lightblue4}{58.96\gain{5.2}} & \cellcolor{lightblue6}{77.24\gain{6.0}} & \cellcolor{lightblue6}{53.60\gain{3.3}} \\
Qwen3-4B-Instruct & 4B & \cellcolor{lightblue4}{56.67} & \cellcolor{lightblue3}{72.22} & \cellcolor{lightblue2}{9.63} & \cellcolor{lightblue5}{42.00} & \cellcolor{lightblue1}{52.24} & \cellcolor{lightblue1}{72.64} & \cellcolor{lightblue2}{50.90} \\
\textbf{UMEM-Qwen3-4B} (Ours) & 4B & \cellcolor{lightblue7}{\textbf{60.00}\gain{3.3}} & \cellcolor{lightblue7}{\textbf{76.26}\gain{4.0}} & \cellcolor{lightblue5}{11.76\gain{2.1}} & \cellcolor{lightblue7}{\textbf{45.00}\gain{3.0}} & \cellcolor{lightblue7}{\textbf{61.19}\gain{9.0}} & \cellcolor{lightblue7}{\textbf{78.61}\gain{6.0}} & \cellcolor{lightblue7}{\textbf{55.47}\gain{4.6}} \\
\bottomrule[1.2pt]
\end{tabular}%
}
\label{tab:main_results}
\end{table*}

\noindent\textbf{Baselines.} 
We evaluate UMEM against several representative paradigms: (1) No Memory, which assesses the frozen backbone LLM without external memory; (2) No Train, a non-learning ablation using identical prompt templates without policy training; (3) Self-RAG~\cite{asai2024selfrag}, which filters retrieved context via inference-time self-critique; (4) Memp~\cite{fang2026mempexploringagentprocedural}, a decoupled pipeline baseline that distills trajectories into fine-grained instructions and high-level scripts through independent Build-Retrieve-Update stages; and (5) ReMem~\cite{wei2025evomemorybenchmarkingllmagent}, a baseline focusing on memory management that maintains trajectory-level memory via discrete operations interleaved with reasoning steps. Unlike these methods, UMEM uniquely targets the joint optimization and granularity alignment of memory extraction and management.

\noindent\textbf{Benchmark.}
We evaluate UMEM on five benchmarks designed to assess memory stability and reusability across single-turn reasoning and multi-turn embodied interaction. For single-turn tasks, we select AIME (merging AIME24 and AIME25) ~\cite{aime24,aime25} and GPQA-Diamond~\cite{rein2023gpqa} to test domain-specific mathematical and scientific reasoning, alongside HLE~\cite{phan2025hle} for multidisciplinary complex reasoning. We also include HotpotQA~\cite{yang2018hotpotqa} to evaluate strategy reuse in multi-hop question answering. For these single-turn benchmarks, performance is reported using Exact Match (EM) accuracy. For multi-turn embodied settings, we adopt ALFWorld~\cite{shridhar2021alfworld}, which requires long-horizon planning and state-dependent decision-making; we report Cumulative Success Rate (CSR) and Progress Rate following prior benchmark/metric practice~\cite{wu2024streambenchbenchmarkingcontinuousimprovement,wei2025evomemorybenchmarkingllmagent}. 

\noindent\textbf{Evaluation Protocol.} We adopt a streaming protocol to assess the agent's continuous self-evolution. Unlike static benchmarks, tasks are processed as a sequential stream.
This zero-reset setup ensures that experiences distilled from trajectory are immediately integrated into memory bank to facilitate the reasoning of all subsequent queries.


\subsection{Main Results}
As illustrated in Table \ref{tab:main_results}, UMEM consistently outperforms all baseline methods, including state-of-the-art memory management systems like ReMem and Memp, across the vast majority of benchmarks. Notably, our framework achieves significant performance leaps in complex reasoning tasks (e.g., AIME and GPQA\_Diamond) and embodied environments like ALFWorld, where UMEM-Qwen3-4B attains a Success Rate of 82.84\% when paired with GPT-5.1. 

A key observation is that the effectiveness of UMEM is positively correlated with the strength of the frozen executor; more powerful executors such as GPT-5.1 and Gemini-2.5-Flash tend to yield more pronounced gains compared to the Qwen3-8B-Thinking baseline. This phenomenon can be attributed to the higher-quality reasoning trajectories and interaction traces produced by stronger executors, which serve as high-fidelity source material for UMEM to distill more actionable and sophisticated insights. 

Furthermore, UMEM exhibits excellent scalability regarding its policy model size. While even a compact 1B model (UMEM-Llama-3.2-1B) provides a substantial improvement over the base model and often surpasses larger models, further scaling the policy model to 4B consistently yields additional performance dividends across nearly all tasks. This suggests that while UMEM is highly efficient at a small scale, increased model capacity allows it to capture more nuanced semantic relationships and implement more precise memory management strategies, thereby further pushing the performance upper bound of self-evolving agents.

\subsection{Ablation Studies}

\begin{table*}[t]
\centering
\caption{Ablation studies on joint optimization components and neighborhood size on \textbf{GPT-5.1} and \textbf{Qwen3-8B-Thinking}. The full UMEM method for each model serves as the baseline. The performance drops (\loss{drop}) or gains (\gain{gain}) of each variant compared to the respective full method are explicitly marked to demonstrate the contribution of each component. Opt. denotes Optimization. SNM denotes the Semantic Neighborhood Modeling.}
\label{tab:ablation}
\resizebox{\textwidth}{!}{%

\begin{tabular}{lcccc cccccc cc}
\toprule[1.2pt]
\multirow{3.5}{*}{\textbf{Method}} & \multicolumn{6}{c}{\textbf{GPT-5.1}} & \multicolumn{6}{c}{\textbf{Qwen3-8B-Thinking}} \\
\cmidrule[0.6pt](lr){2-7} \cmidrule[0.6pt](lr){8-13}
 & \textbf{AIME} & \textbf{GPQA} & \textbf{HLE} & \textbf{HotpotQA} & \multicolumn{2}{c}{\textbf{ALFWorld}} & \textbf{AIME} & \textbf{GPQA} & \textbf{HLE} & \textbf{HotpotQA} & \multicolumn{2}{c}{\textbf{ALFWorld}} \\
\cmidrule[0.6pt](lr){6-7} \cmidrule[0.6pt](lr){12-13}
 & (Acc.) & (Acc.) & (Acc.) & (Acc.) & SR & PR & (Acc.) & (Acc.) & (Acc.) & (Acc.) & SR & PR \\
 
\cmidrule[0.6pt](lr){1-13}
\multicolumn{13}{l}{\textit{Joint Optimization Components}} \\ 
\cmidrule[0.6pt](lr){1-13}

\textbf{UMEM (Full Method)} & \cellcolor{lightblue7}{\textbf{51.67}} & \cellcolor{lightblue7}{\textbf{65.15}} & \cellcolor{lightblue5}{8.56} & \cellcolor{lightblue5}{54.00} & \cellcolor{lightblue7}{\textbf{82.84}} & \cellcolor{lightblue5}{84.20} & \cellcolor{lightblue7}{\textbf{58.33}} & \cellcolor{lightblue7}{\textbf{53.54}} & \cellcolor{lightblue5}{8.02} & \cellcolor{lightblue7}{\textbf{63.00}} & \cellcolor{lightblue5}{50.75} & \cellcolor{lightblue5}{\textbf{73.13}} \\
\hspace{3mm} w/o Extraction Opt. & \cellcolor{lightblue2}{45.00\loss{6.7}} & \cellcolor{lightblue1}{59.60\loss{5.6}} & \cellcolor{lightblue1}{5.88\loss{2.7}} & \cellcolor{lightblue2}{51.00\loss{3.0}} & \cellcolor{lightblue1}{76.12\loss{6.7}} & \cellcolor{lightblue1}{80.72\loss{3.5}} & \cellcolor{lightblue3}{55.00\loss{3.3}} & \cellcolor{lightblue2}{51.01\loss{2.5}} & \cellcolor{lightblue5}{8.02\gain{0.0}} & \cellcolor{lightblue4}{61.00\loss{2.0}} & \cellcolor{lightblue4}{45.53\loss{5.2}} & \cellcolor{lightblue2}{67.79\loss{5.3}} \\
\hspace{3mm} w/o Management Opt. & \cellcolor{lightblue5}{48.33\loss{3.3}} & \cellcolor{lightblue5}{64.65\loss{0.5}} & \cellcolor{lightblue7}{\textbf{9.09}\gain{0.5}} & \cellcolor{lightblue7}{\textbf{55.00}\gain{1.0}} & \cellcolor{lightblue4}{80.60\loss{2.2}} & \cellcolor{lightblue7}{\textbf{84.33}\gain{0.1}} & \cellcolor{lightblue5}{56.67\loss{1.7}} & \cellcolor{lightblue5}{53.03\loss{0.5}} & \cellcolor{lightblue1}{6.95\loss{1.1}} & \cellcolor{lightblue7}{63.00\gain{0.0}} & \cellcolor{lightblue3}{44.70\loss{6.1}} & \cellcolor{lightblue3}{69.03\loss{4.1}} \\
\hspace{3mm} w/o SNM & \cellcolor{lightblue1}{41.67\loss{10.0}} & \cellcolor{lightblue4}{64.14\loss{1.0}} & \cellcolor{lightblue2}{6.95\loss{1.6}} & \cellcolor{lightblue4}{52.00\loss{2.0}} & \cellcolor{lightblue3}{79.10\loss{3.7}} & \cellcolor{lightblue2}{81.09\loss{3.1}} & \cellcolor{lightblue3}{55.00\loss{3.3}} & \cellcolor{lightblue1}{50.00\loss{3.5}} & \cellcolor{lightblue3}{7.49\loss{0.5}} & \cellcolor{lightblue2}{60.00\loss{3.0}} & \cellcolor{lightblue7}{\textbf{52.99}\gain{2.2}} & \cellcolor{lightblue4}{72.30\loss{0.8}} \\

\cmidrule[0.6pt](lr){1-13}
\multicolumn{13}{l}{\textit{Sensitivity to Neighborhood Size}} \\ 
\cmidrule[0.6pt](lr){1-13}

\textbf{UMEM ($N=3$, Ours)} & \cellcolor{lightblue7}{\textbf{51.67}} & \cellcolor{lightblue7}{\textbf{65.15}} & \cellcolor{lightblue5}{\textbf{8.56}} & \cellcolor{lightblue5}{\textbf{54.00}} & \cellcolor{lightblue7}{\textbf{82.84}} & \cellcolor{lightblue5}{\textbf{84.20}} & \cellcolor{lightblue7}{\textbf{58.33}} & \cellcolor{lightblue7}{\textbf{53.54}} & \cellcolor{lightblue5}{8.02} & \cellcolor{lightblue7}{\textbf{63.00}} & \cellcolor{lightblue5}{\textbf{50.75}} & \cellcolor{lightblue5}{73.13} \\
\hspace{3mm} $N=1$ (Too Narrow) & \cellcolor{lightblue5}{48.33\loss{3.3}} & \cellcolor{lightblue2}{63.13\loss{2.0}} & \cellcolor{lightblue4}{7.49\loss{1.1}} & \cellcolor{lightblue1}{50.00\loss{4.0}} & \cellcolor{lightblue2}{78.36\loss{4.5}} & \cellcolor{lightblue3}{81.34\loss{2.9}} & \cellcolor{lightblue1}{51.67\loss{6.7}} & \cellcolor{lightblue3}{51.52\loss{2.0}} & \cellcolor{lightblue5}{8.02\gain{0.0}} & \cellcolor{lightblue1}{59.00\loss{4.0}} & \cellcolor{lightblue1}{41.79\loss{9.0}} & \cellcolor{lightblue1}{67.66\loss{5.5}} \\
\hspace{3mm} $N=5$ (Too Broad) & \cellcolor{lightblue4}{46.67\loss{5.0}} & \cellcolor{lightblue5}{64.65\loss{0.5}} & \cellcolor{lightblue4}{7.49\loss{1.1}} & \cellcolor{lightblue4}{52.00\loss{2.0}} & \cellcolor{lightblue5}{81.34\loss{1.5}} & \cellcolor{lightblue4}{84.08\loss{0.1}} & \cellcolor{lightblue7}{58.33\gain{0.0}} & \cellcolor{lightblue4}{52.53\loss{1.0}} & \cellcolor{lightblue7}{\textbf{9.63}\gain{1.6}} & \cellcolor{lightblue5}{62.00\loss{1.0}} & \cellcolor{lightblue2}{42.54\loss{8.2}} & \cellcolor{lightblue7}{\textbf{73.26}\gain{0.1}} \\
\bottomrule[1.2pt]
\end{tabular}%
}
\end{table*}

This section validates the effectiveness of our designs in UMEM by ablation studies: 
(1) the necessity and sensitivity of semantic neighborhood modeling; and (2) joint optimization on memory extraction and management.

\noindent\textbf{Semantic Neighborhood Modeling.}
We first examine the necessity of Semantic Neighborhood Modeling.
The forth row in Table~\ref{tab:ablation} reveals that removing it during training results in significant performance collapse, particularly on the reasoning-heavy AIME benchmark (GPT-5.1: dropping from 51.67 to 41.67; Qwen3-8B: dropping from 58.33 to 55.00).
Furthermore, we also investigate the impact of the semantic neighborhood size $N \in \{1, 3, 5\}$. As reported in last three rows in Table~\ref{tab:ablation}, $N=3$ yields the optimal balance between task-specific optimization and cross-task transfer. 
Performance degrades at both extremes: an overly narrow neighborhood ($N=1$) fails to capture task shifts (GPT-5.1: AIME drops to 48.33; Qwen3-8B: dropping from 58.33 to 51.67), while an overly broad one ($N=5$) introduces noise that dilutes the reward signal during optimization.

\noindent\textbf{Joint Optimization.}
We evaluate the contribution of memory extraction and  management by masking the ``gradient" of their respective tokens. 
As shown in the first two rows of Table~\ref{tab:ablation}, breaking the joint optimization leads to severe performance degradation across the majority of benchmarks.
Specifically, disabling memory extraction optimization results in a average performance decline of 4.7 points across all metrics, which is significantly higher than that observed when removing management optimization (0.73 points).
These results reveal that optimizing the quality of extracted memory is the more important for effective self-evolution.



\subsection{Stability of Self-Evolution\label{subsec:evolution}}

\begin{figure*}[t]
    \centering
    \includegraphics[width=0.94\textwidth]{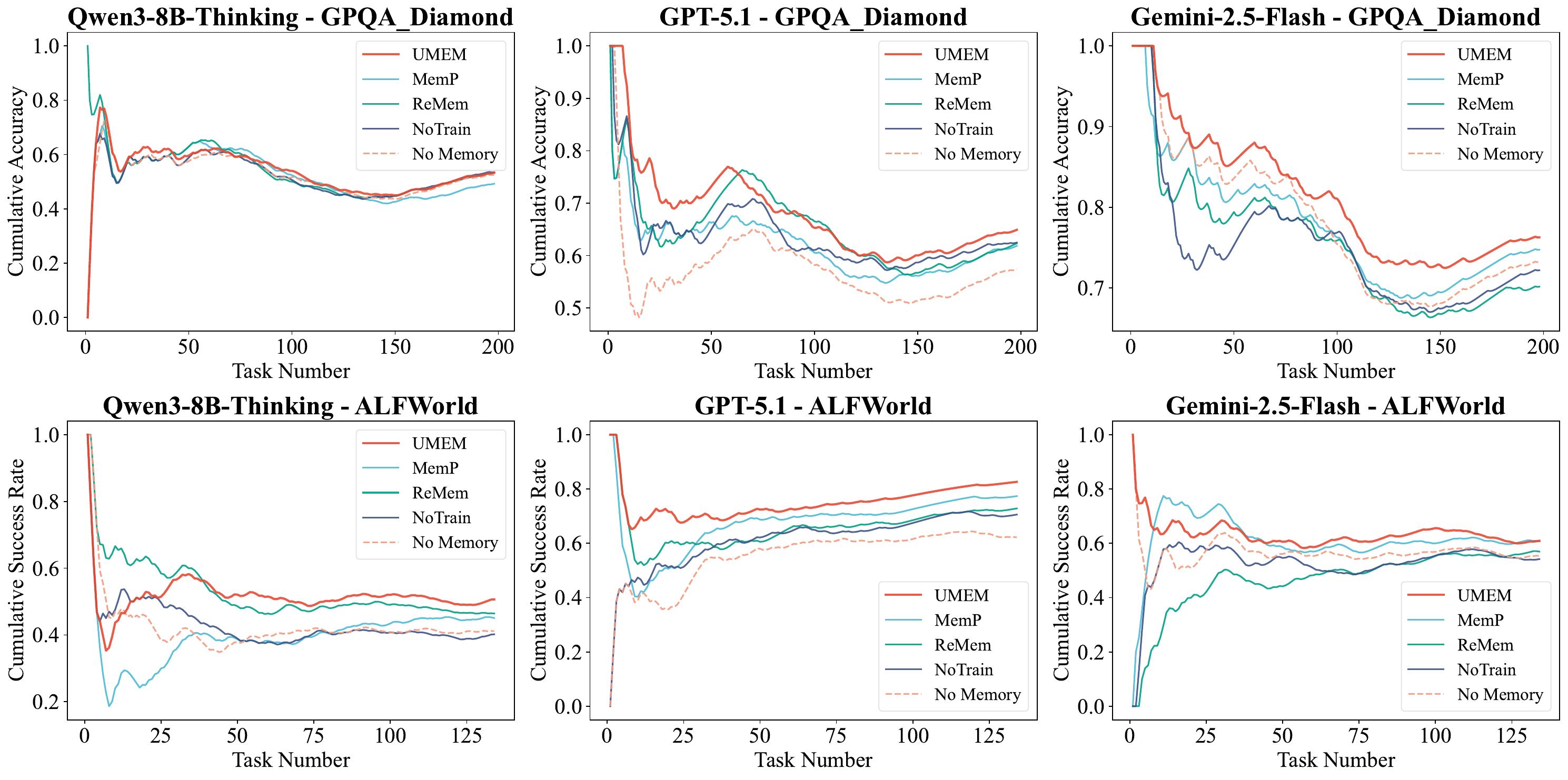}
    \caption{Cumulative performance over sequential tasks on GPQA-Diamond and ALFWorld Benchmarks.}
    \label{fig:cumulative_acc}
\end{figure*}


We evaluate UMEM under a continual learning setting across both single-turn reasoning benchmarks and the multi-turn ALFWorld environment, reporting the cumulative accuracy in Figure~\ref{fig:cumulative_acc}. 
In this streaming protocol, the agent must continuously evolve its memory bank without resetting. This poses a severe challenge: error accumulation. As interaction proceeds, flawed memory extraction policies tend to pollute the memory bank with noise or instance-specific shortcuts, degrading performance on subsequent tasks.
As shown in Figure~\ref{fig:cumulative_acc}, under this challenging setting, UMEM consistently maintains a superior performance curve compared to baselines, particularly in the later stages. It exhibits significantly slower and more controlled degradation than ReMem and MemP across all evaluations, with the performance gap widening as interaction proceeds.
Crucially, ReMem (green curve), which optimizes memory management in isolation, suffers the most rapid degradation and results in the lowest final performance, proving the necessity of jointly optimization.
This behavior indicates that UMEM accumulates fewer harmful memories over long horizons, and that its advantage stems not from whether memory is learned, but from how memory extraction and management are coordinated during continual evolution.

The extracted memories of the baselines like ReMem and Memp may appear locally better, yet their long-horizon utility remains opaque to the memory manager.
Consequently, such memories are often retained and repeatedly reused even when they introduce subtle reasoning errors, leading to progressive error amplification in cumulative evaluation.
In contrast, the substantially reduced degradation observed for UMEM suggests that newly updated memories are more consistently aligned with future reuse.

Taken together, these results support the conclusion that stable self-evolution requires memory updates to be tightly coupled with the context in which errors arise.
By evolving memory primarily around experiences most relevant to the current trajectory, UMEM promotes structured knowledge consolidation rather than unconstrained accumulation.
From an optimization perspective, this behavior corresponds to sparse, localized updates over external memory parameters, which naturally limit interference and mitigate long-horizon error accumulation.

\subsection{Test-Time Self-Evolution}

\begin{figure}[t]
    \centering
    \includegraphics[width=\linewidth]{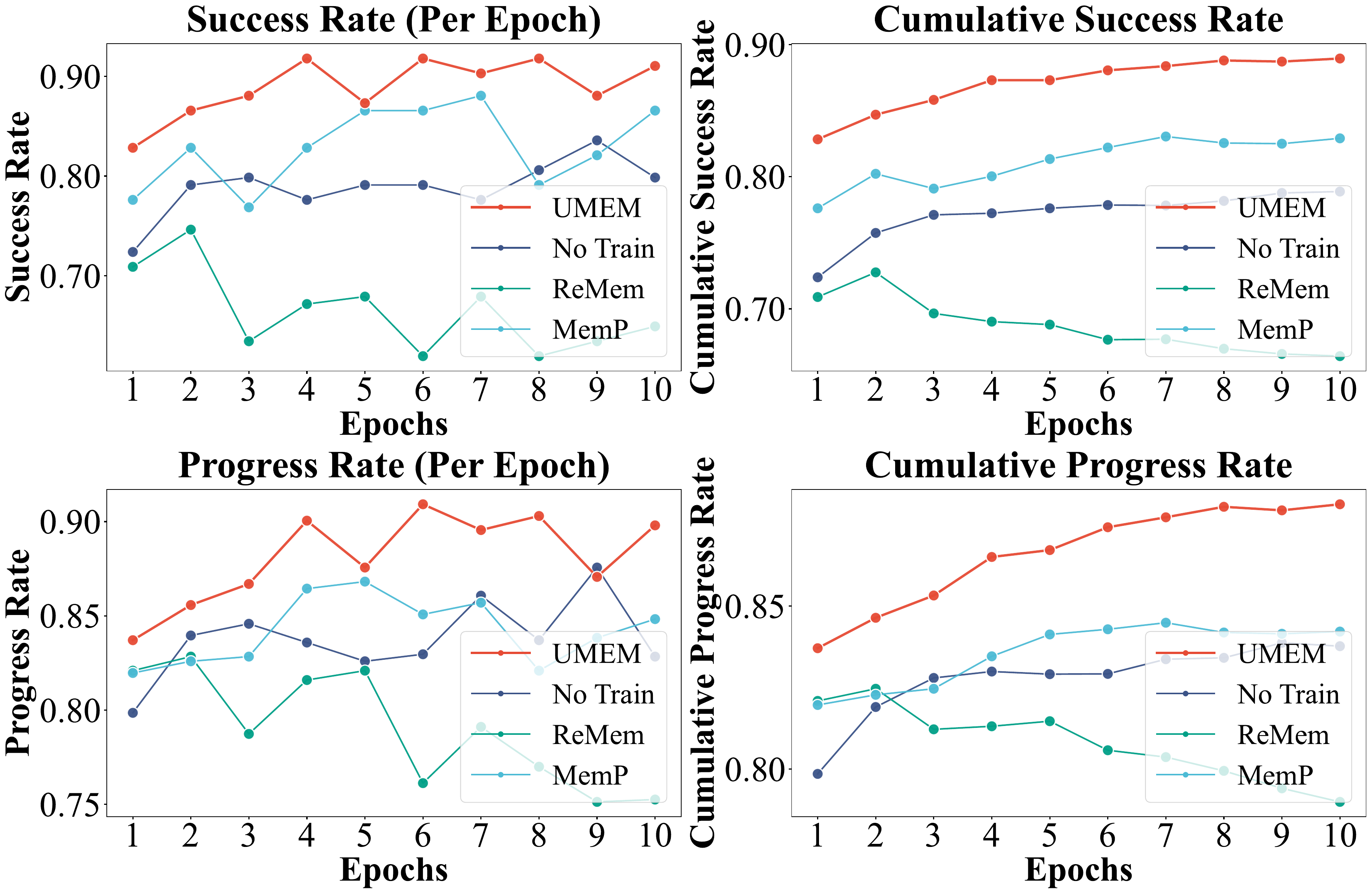}
    \caption{Test-Time Self-Evolution on ALFWorld.}
    \label{fig:test_time_self_evolve}
\end{figure}

To further validate the sustainability of self-evolution beyond the single-epoch setting in Section~\ref{subsec:evolution}, we extend the experimental scope from 1 epoch to a rigorous 10-epoch long-horizon continual interaction on the ALFWorld benchmark with GPT-5.1 as the executor.
Figure~\ref{fig:test_time_self_evolve} reports both epoch-wise and cumulative Success Rate and Progress Rate.
As shown in the per-epoch Success Rate, UMEM consistently achieves the highest performance across all epochs.
Although online retrieval and memory updates inevitably introduce performance fluctuations, UMEM recovers quickly after temporary drops, indicating a well-balanced memory strategy between exploration and stability during continual evolution.
The cumulative Success Rate further highlights UMEM’s advantage. UMEM shows a steady and sustained improvement trend, converging to a substantially higher performance level than all baselines.
Beyond final task success, UMEM also consistently outperforms baselines on Progress Rate, with a particularly pronounced margin in cumulative metrics. This trend suggests that, even in partially unsuccessful episodes, UMEM tends to execute more correct intermediate steps, reflecting more stable multi-step decision-making.
Overall, these results indicate that UMEM supports a more stable and sustainable form of agent self-evolution under continual interaction.

\subsection{Cross-Model Effectiveness and Efficiency}

\begin{figure}[t]
    \centering
    \includegraphics[width=\linewidth]{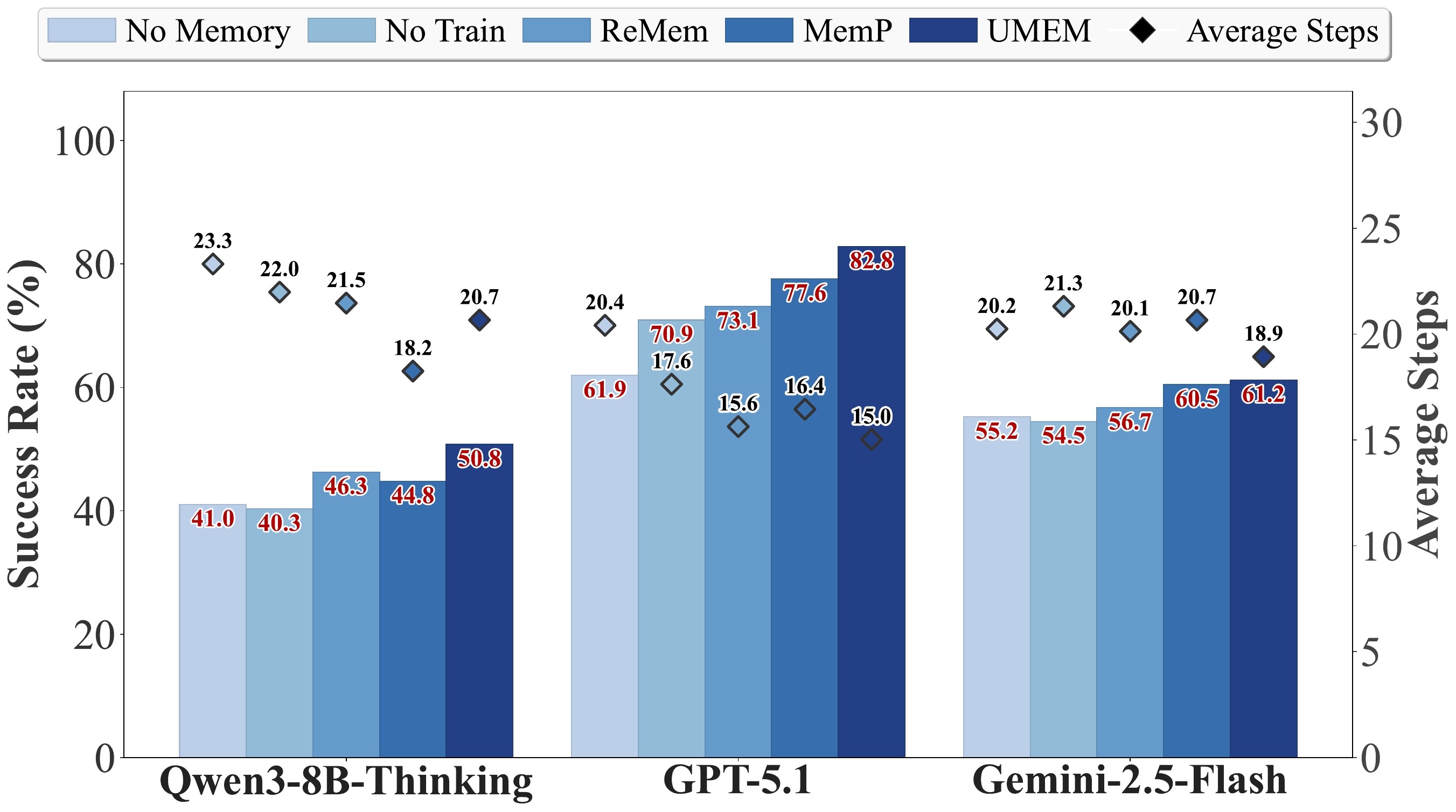}
    \caption{Success Rate and Average Steps on ALFWorld benchmark across different executor models.}
    \label{fig:step_efficiency}
\end{figure}

Figure~\ref{fig:step_efficiency} reports Success Rate and Average Steps on ALFWorld across different executor LLMs.
UMEM consistently achieves the highest Success Rate for all executors, indicating that the evolved experiences provide robust, executor-agnostic performance gains.
Notably, this improvement is accompanied by a clear reduction in Average Steps, showing that higher success is not obtained through longer or more exploratory interaction trajectories, but through more efficient decision making during interaction. This efficiency gain is evident in case study \ref{fig:case_study}.

The joint improvement in success and efficiency provides insight into the nature of the experiences evolved by UMEM.
In long-horizon interactive tasks, overly specific experiences often lead to shortcut behaviors that fail to generalize to similar tasks, ultimately causing execution failures; in contrast, overly coarse heuristics fail to sufficiently constrain execution and result in longer trajectories. 
Across all executor models, UMEM consistently avoids these failure modes, achieving higher success with fewer execution steps.
This pattern indicates that the observed gains reflect a genuine improvement in execution efficiency that generalizes across executors, rather than an artifact of increased interaction length or model-specific behavior.

\section{Conclusion}

In this paper, we introduced UMEM for self-evolving agents. Unlike prior approaches that treat memory extraction and management as static or decoupled processes, UMEM achieves joint optimization of extraction and management through Semantic Neighborhood Modeling and GRPO augmented with a Marginal Utility Reward. 
This design effectively mitigates the accumulation of instance-specific noise and ensures that extracted memories are intrinsically aligned with the agent's management policy. 
Empirical results demonstrate that UMEM significantly outperforms highly competitive baselines in both cross-task generalization and execution efficiency. 
By enabling agents to continuously refine the memory bank during continuous interaction, UMEM offers a robust paradigm for realizing lifelong learning in open-ended environments.

\section*{Impact Statement}
This paper presents work whose goal is to advance the field of Machine
Learning. There are many potential societal consequences of our work, none
of which we feel must be specifically highlighted here.

\bibliography{example_paper}
\bibliographystyle{icml2026}

\newpage
\appendix
\onecolumn
\section{Implementation Details}
\label{appendix:implementation_details}
We optimize the Mem-Optimizer using GRPO~\citep{shao2024deepseekmath}.
For each update, we sample a batch of 128 training queries and generate $G{=}8$ rollouts per query.
Training is conducted for 3 epochs.
Semantic neighborhoods are constructed with Top-$N{=}3$ neighbors, while retrieval during memory evolution uses Top-$K{=}3$ memories.
We apply KL regularization with coefficient $\beta{=}0.001$ and use a clipping ratio of $\epsilon{=}0.2$.
The learning rate is set to $1\times10^{-6}$.
During training, generation is performed with temperature 1.0 to encourage exploration.
At evaluation time, as well as for executor inference, we use greedy decoding with temperature 0.0.
Our method is implemented using the \texttt{verl} framework~\citep{sheng2024verl} and trained on 16 NVIDIA A100 GPUs for approximately 11 hours.

\section{Mem-Optimizer Action Template}
\label{app:memory_template}

Each Mem-Optimizer action is represented as a structured output following the template:

\begin{center}
\texttt{<experience><value>\ldots</value><operation>\ldots</operation></experience>}
\end{center}
where \texttt{<value>} encodes the extracted memory content derived from the interaction trace, and \texttt{<operation>} specifies the corresponding memory evolution decision (e.g., addition, replacement).

\section{Theoretical Analysis}
\subsection{Cosine Neighborhood as a Proxy for Reuse-Semantic Proximity}
\label{app:cosine_proxy}

\begin{lemma}[Retrieval-score stability under cosine proximity]
Let $e(\cdot)$ be $\ell_2$-normalized embeddings, i.e., $\|e(x)\|_2=1$.
For any two queries $q_1,q_2$ and any candidate key $k$,
\[
\big| e(q_1)^\top e(k) - e(q_2)^\top e(k)\big|
\le \|e(q_1)-e(q_2)\|_2
= \sqrt{2-2\,e(q_1)^\top e(q_2)}.
\]
\end{lemma}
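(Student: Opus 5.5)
The plan is to rewrite the left-hand side as a single inner product and then apply Cauchy--Schwarz. By bilinearity of the inner product,
\[
e(q_1)^\top e(k) - e(q_2)^\top e(k) = \big(e(q_1)-e(q_2)\big)^\top e(k).
\]
Taking absolute values and applying the Cauchy--Schwarz inequality bounds this by $\|e(q_1)-e(q_2)\|_2\,\|e(k)\|_2$. The normalization hypothesis $\|e(k)\|_2=1$ then removes the second factor, which gives the inequality part of the statement.

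For the equality part, I will expand the squared norm:
\[
\|e(q_1)-e(q_2)\|_2^2 = \|e(q_1)\|_2^2 + \|e(q_2)\|_2^2 - 2\,e(q_1)^\top e(q_2) = 2 - 2\,e(q_1)^\top e(q_2).
\]
The last step uses unit norms again. The expression $2-2\,e(q_1)^\top e(q_2)$ is nonnegative because $|e(q_1)^\top e(q_2)|\le 1$, again by Cauchy--Schwarz. So taking the nonnegative square root of both sides is legitimate and yields the stated closed form.

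There is no real obstacle here. The only points needing care are that normalization is used twice, once for the key and once for the two queries, and that the quantity under the square root is nonnegative before the root is taken. I will close with a remark on why this matters for Semantic Neighborhood Modeling. Queries with high cosine similarity, such as the Top-$N$ neighbors in $\mathcal{N}_N(q)$, have retrieval scores against every key that differ by at most $\sqrt{2-2\cos}$. Hence they retrieve nearly the same memory entries, which is why the neighborhood serves as a proxy for reuse.
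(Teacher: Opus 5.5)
Your proof is correct and follows the paper's argument: Cauchy--Schwarz with $\|e(k)\|_2=1$ gives the inequality, and the unit-vector identity $\|u-v\|_2^2 = 2-2u^\top v$ gives the closed form. Your separate nonnegativity check is harmless but redundant, since $2-2\,e(q_1)^\top e(q_2)$ equals a squared norm.
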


\begin{proof}
Since $\|e(k)\|_2=1$, by Cauchy--Schwarz,
\[
\big| e(q_1)^\top e(k) - e(q_2)^\top e(k)\big|
= \big| (e(q_1)-e(q_2))^\top e(k)\big|
\le \|e(q_1)-e(q_2)\|_2.
\]
For unit vectors, $\|u-v\|_2^2 = 2-2u^\top v$, hence the equality.
\end{proof}

\paragraph{Interpretation.}
High cosine similarity guarantees that $q_1$ and $q_2$ assign nearly identical relevance scores to any memory key.
This score stability ensures highly overlapping retrieval rankings (and thus similar Top-$K$ sets).
Consequently, the cosine neighborhood of a source query effectively captures the cluster of future queries that will likely retrieve (and reuse) the same memory.

\section{Prompt Templates}
\label{app:prompts}

We present the detailed instruction templates used in our framework, encompassing both the Memory Optimizer and the Executor LLM.
First, the system prompt for the Memory Optimizer, which is responsible for refining and organizing retrieved past experiences, is shown in Prompt Prompt \ref{fig:full_summarizer_comparison}.
For the Executor LLM, we designed distinct system prompts to adhere to specific output formats across different domains during training and evaluation. Specifically, mathematical reasoning tasks follow the instructions in Prompt \ref{prompt:math}. The unified template for multiple-choice questions (handling both index-based and letter-based outputs) is presented in Prompt \ref{prompt:mcq}, while general question-answering tasks are guided by Prompt \ref{prompt:qa}.

\begin{figure*}[t]
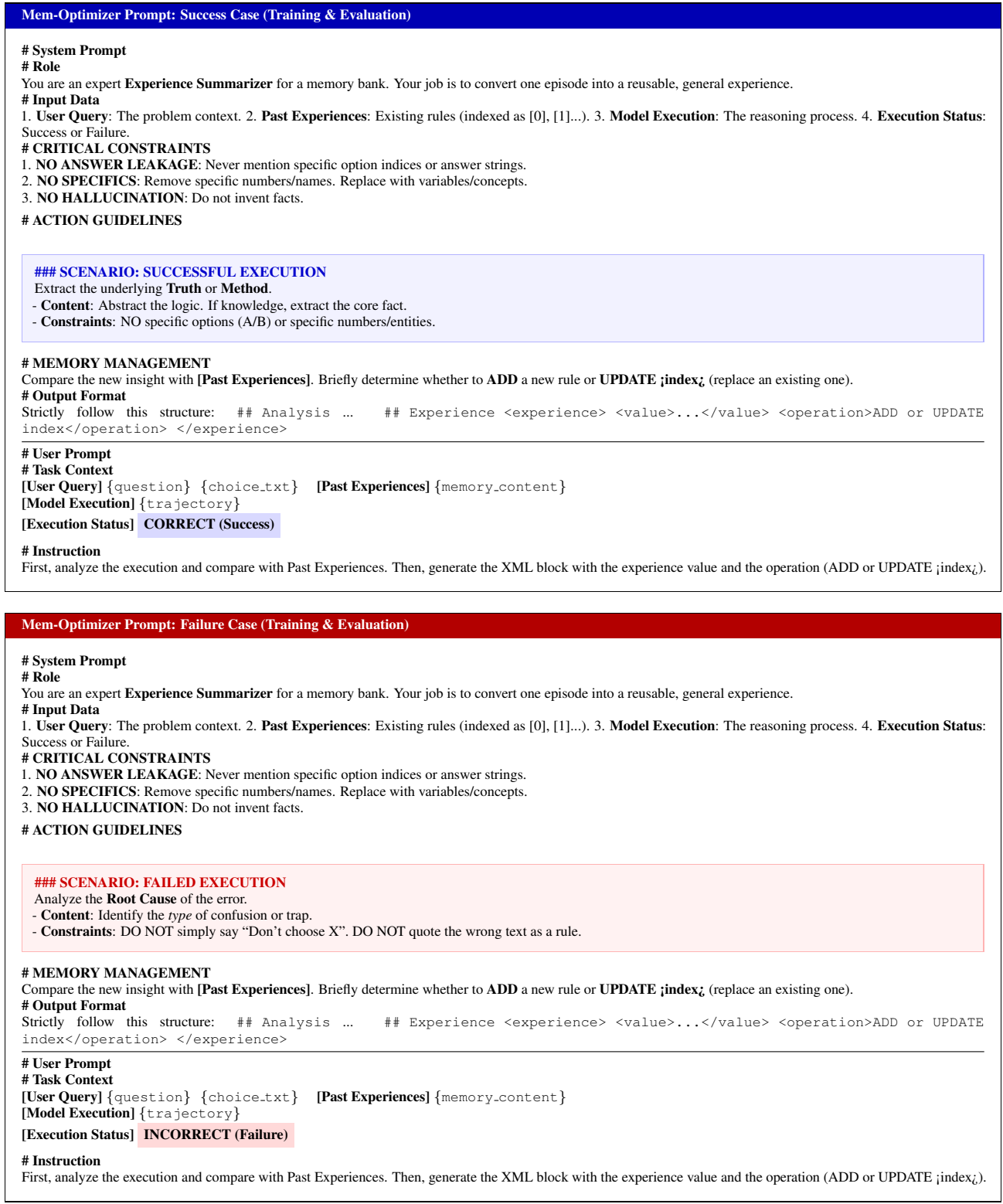

\centering
\scriptsize
\begin{tcbraster}[
    raster columns=1, 
    raster row skip=10pt, 
    sharp corners, 
    boxrule=0.5pt,
    colback=white,
    colframe=black,
    fonttitle=\bfseries,
    left=5pt, right=5pt, top=5pt, bottom=5pt
]                                                                                                                                                               
    \begin{tcolorbox}[title=Mem-Optimizer Prompt: Success Case (Training \& Evaluation), colbacktitle=blue!70!black]
    \textbf{\# System Prompt} \\
    \textbf{\# Role} \\
    You are an expert \textbf{Experience Summarizer} for a memory bank. Your job is to convert one episode into a reusable, general experience. \\
    \textbf{\# Input Data} \\
    1. \textbf{User Query}: The problem context. 2. \textbf{Past Experiences}: Existing rules (indexed as [0], [1]...). 3. \textbf{Model Execution}: The reasoning process. 4. \textbf{Execution Status}: Success or Failure. \\
    \textbf{\# CRITICAL CONSTRAINTS} \\
    1. \textbf{NO ANSWER LEAKAGE}: Never mention specific option indices or answer strings. \\
    2. \textbf{NO SPECIFICS}: Remove specific numbers/names. Replace with variables/concepts. \\
    3. \textbf{NO HALLUCINATION}: Do not invent facts.

    \smallskip
    \textbf{\# ACTION GUIDELINES} \\
    \begin{tcolorbox}[colback=blue!5, colframe=blue!30, boxrule=0.5pt, sharp corners, left=3pt, right=3pt, top=3pt, bottom=3pt]
    \textcolor{blue!80!black}{\textbf{\#\#\# SCENARIO: SUCCESSFUL EXECUTION}} \\
    Extract the underlying \textbf{Truth} or \textbf{Method}. \\
    - \textbf{Content}: Abstract the logic. If knowledge, extract the core fact. \\
    - \textbf{Constraints}: NO specific options (A/B) or specific numbers/entities.
    \end{tcolorbox}

    \smallskip
    \textbf{\# MEMORY MANAGEMENT} \\
    Compare the new insight with \textbf{[Past Experiences]}. Briefly determine whether to \textbf{ADD} a new rule or \textbf{UPDATE <index>} (replace an existing one). \\
    \textbf{\# Output Format} \\
    Strictly follow this structure: \texttt{\#\# Analysis} ... \texttt{\#\# Experience <experience> <value>...</value> <operation>ADD or UPDATE index</operation> </experience>}

    \smallskip
    \hrule
    \smallskip
    
    \textbf{\# User Prompt} \\
    \textbf{\# Task Context} \\
    \textbf{[User Query]} \texttt{\{question\} \{choice\_txt\}} \quad \textbf{[Past Experiences]} \texttt{\{memory\_content\}} \\
    \textbf{[Model Execution]} \texttt{\{trajectory\}} \\
    \textbf{[Execution Status]} \colorbox{blue!15}{\textbf{CORRECT (Success)}}

    \smallskip
    \textbf{\# Instruction} \\
    First, analyze the execution and compare with Past Experiences. Then, generate the XML block with the experience value and the operation (ADD or UPDATE <index>).
    \end{tcolorbox}

    \begin{tcolorbox}[title=Mem-Optimizer Prompt: Failure Case (Training \& Evaluation), colbacktitle=red!70!black]
    \textbf{\# System Prompt} \\
    \textbf{\# Role} \\
    You are an expert \textbf{Experience Summarizer} for a memory bank. Your job is to convert one episode into a reusable, general experience. \\
    \textbf{\# Input Data} \\
    1. \textbf{User Query}: The problem context. 2. \textbf{Past Experiences}: Existing rules (indexed as [0], [1]...). 3. \textbf{Model Execution}: The reasoning process. 4. \textbf{Execution Status}: Success or Failure. \\
    \textbf{\# CRITICAL CONSTRAINTS} \\
    1. \textbf{NO ANSWER LEAKAGE}: Never mention specific option indices or answer strings. \\
    2. \textbf{NO SPECIFICS}: Remove specific numbers/names. Replace with variables/concepts. \\
    3. \textbf{NO HALLUCINATION}: Do not invent facts.

    \smallskip
    \textbf{\# ACTION GUIDELINES} \\
    \begin{tcolorbox}[colback=red!5, colframe=red!30, boxrule=0.5pt, sharp corners, left=3pt, right=3pt, top=3pt, bottom=3pt]
    \textcolor{red!80!black}{\textbf{\#\#\# SCENARIO: FAILED EXECUTION}} \\
    Analyze the \textbf{Root Cause} of the error. \\
    - \textbf{Content}: Identify the \textit{type} of confusion or trap. \\
    - \textbf{Constraints}: DO NOT simply say ``Don't choose X''. DO NOT quote the wrong text as a rule.
    \end{tcolorbox}

    \smallskip
    \textbf{\# MEMORY MANAGEMENT} \\
    Compare the new insight with \textbf{[Past Experiences]}. Briefly determine whether to \textbf{ADD} a new rule or \textbf{UPDATE <index>} (replace an existing one). \\
    \textbf{\# Output Format} \\
    Strictly follow this structure: \texttt{\#\# Analysis} ... \texttt{\#\# Experience <experience> <value>...</value> <operation>ADD or UPDATE index</operation> </experience>}

    \smallskip
    \hrule
    \smallskip
    
    \textbf{\# User Prompt} \\
    \textbf{\# Task Context} \\
    \textbf{[User Query]} \texttt{\{question\} \{choice\_txt\}} \quad \textbf{[Past Experiences]} \texttt{\{memory\_content\}} \\
    \textbf{[Model Execution]} \texttt{\{trajectory\}} \\
    \textbf{[Execution Status]} \colorbox{red!15}{\textbf{INCORRECT (Failure)}}

    \smallskip
    \textbf{\# Instruction} \\
    First, analyze the execution and compare with Past Experiences. Then, generate the XML block with the experience value and the operation (ADD or UPDATE <index>).
    \end{tcolorbox}
\end{tcbraster}

\caption{Comparison of Mem-Optimizer prompt templates for successful (top) and failed (bottom) executions. These templates are employed during both training and evaluation phases to either extract general methodologies or diagnose root causes.}
\label{fig:full_summarizer_comparison}
\end{figure*}
\begin{promptbox}{Executor Prompt Template: Mathematical Reasoning}{prompt:math}
\roleSystem
\texttt{\# Role}\\
You are an expert Math Task Execution Agent. Your goal is to solve mathematical problems by applying logic and methods from \textbf{Past Effective Experiences}.

\texttt{\# Input Data}\\
1. \textbf{Past Experiences}: Relevant formulas, theorems, or similar solved examples.\\
2. \textbf{Question}: The specific math problem you need to solve.

\texttt{\# Instructions}\\
1. Analyze the \textbf{Question} to identify the mathematical concepts involved.\\
2. Refer to the \textbf{Past Experiences} to find the correct formula, method, or logic pattern.\\
3. Perform the \textbf{Problem Solving Process} step-by-step. Show your work, calculations, and derivations clearly.

\tcblower

\roleUser
\texttt{\# Current Task}\\
Solve the following problem.

\textbf{Question}:\\
\promptVar{question}

\textbf{Past Experiences}:\\
\promptVar{memory\_section}

Please reason step by step, and put your final answer within \texttt{\textbackslash boxed\{\}}.
\end{promptbox}
\begin{promptbox}{Executor Unified Prompt Template: Multiple Choice Tasks}{prompt:mcq}
\roleSystem
\texttt{\# Role}\\
You are an expert Task Execution Agent. Your goal is to solve multiple-choice questions by applying \textbf{Past Effective Experiences}.

\texttt{\# Input Data}\\
1. \textbf{Past Experiences}: Historical context or rules to guide your decision.\\
2. \textbf{Question}: The specific problem you need to solve.\\
3. \textbf{Options}: A list of candidate answers.

\texttt{\# Instructions}\\
1. Analyze the \textbf{Question} carefully.\\
2. Refer to the \textbf{Past Experiences} to find the logic or evidence required to solve the problem.\\
3. Evaluate the \textbf{Options} and select the best one.\\
4. \textbf{CRITICAL}:
\begin{itemize}
    \item \textit{[For Index Tasks]:} Identify the \textbf{Index} of the selected option based on a \textbf{0-based system} (i.e., 0, 1, ...).
    \item \textit{[For Letter Tasks]:} Identify the \textbf{Letter} of the selected option (i.e., A, B, C, or D).
\end{itemize}

\tcblower

\roleUser
\texttt{\# Current Task}\\
\textbf{Question}:\\
\promptVar{question}

\textbf{Options}:\\
\promptVar{choice\_block}

\textbf{Past Experiences}:\\
\promptVar{memory\_section}

\texttt{\# Output Format}\\
Analyze the options and the question step-by-step.\\
Output the final answer
\begin{itemize}
    \item \textit{[For Index Tasks]:} index wrapped in \texttt{\textbackslash boxed\{index\}}, e.g., \texttt{\textbackslash boxed\{0\}}.
    \item \textit{[For Letter Tasks]:} single letter wrapped in \texttt{\textbackslash boxed\{Letter\}}, e.g., \texttt{\textbackslash boxed\{A\}}.
\end{itemize}
\end{promptbox}
\begin{promptbox}{Executor Prompt Template: Question Answering (QA)}{prompt:qa}
\roleSystem
\texttt{\# Role}\\
You are an expert Question Answering Agent. Your goal is to answer questions based on the provided \textbf{Context} and applying \textbf{Past Effective Experiences}.

\texttt{\# Input Data}\\
1. \textbf{Past Experiences}: Historical context, strategies, or rules to guide your reasoning.\\
2. \textbf{Context}: Background information, documents, or text passages relevant to the question.\\
3. \textbf{Question}: The specific inquiry you need to answer.

\texttt{\# Instructions}\\
1. Read the \textbf{Context} carefully to extract relevant facts.\\
2. Refer to \textbf{Past Experiences} to find successful reasoning patterns or specific knowledge that supplements the context.\\
3. Synthesize the information to answer the \textbf{Question} accurately and concisely.

\tcblower

\roleUser
\texttt{\# Current Task}\\
\textbf{Question}:\\
\promptVar{question}\\
\promptVar{context\_block}

\textbf{Past Experiences}:\\
\promptVar{memory\_section}

\texttt{\# Output Format}\\
You must strictly follow this format:\\
First, provide your reasoning process, citing the context or experiences where applicable.\\
Then, output the final answer wrapped in \texttt{\textbackslash boxed\{\}}.
\end{promptbox}

\section{Case Study}
This case study \ref{fig:case_study} illustrates how retrieved experiences enable effective knowledge transfer and task completion. The task ``put a clean cloth in countertop'' contains an implicit requirement: the cloth must be \textit{cleaned} before placement, not merely moved. 

\textbf{UMEM Enhanced Agent.} By retrieving experiences from analogous tasks (cleaning plates, knives, and pans), the agent recognizes a generalizable pattern: \textit{locate object $\rightarrow$ pick up $\rightarrow$ go to sinkbasin $\rightarrow$ clean with sinkbasin $\rightarrow$ place on target}. Although the agent initially explores incorrect locations (handtowelholder) and picks up the wrong object (handtowel), it self-corrects upon discovering the cloth and successfully applies the cleaning procedure learned from memory. This demonstrates the agent's ability to \textbf{transfer procedural knowledge} across different object types (plate/knife/pan $\rightarrow$ cloth) and \textbf{recover from exploration errors} through experience-guided reasoning.

\textbf{Baseline.} Lacking prior experiences, this agent interprets the task literally as a simple pick-and-place operation. Despite locating the cloth quickly, it repeatedly executes \texttt{take} $\rightarrow$ \texttt{move} actions without ever invoking the \texttt{clean} command. Notably, even after querying the \texttt{help} command and seeing ``clean (object) with (receptacle)'' in the available actions, the agent fails to connect this capability to the task requirement. This reveals a critical limitation: \textbf{without experiential knowledge linking the task semantics to the required action sequence, the agent cannot infer the missing step}, resulting in an ineffective loop of 30 repeated attempts.

\textbf{Key Insights.} (1) \textit{Semantic understanding}: Experiences provide crucial context for interpreting implicit task requirements (``clean'' as a prerequisite, not just a descriptor). (2) \textit{Efficiency}: With memories extracted by UMEM, the exactor completes the task in 13 steps through meaningful exploration, whereas the baseline agent falls into a futile loop of repetitive actions and exhausts 30 steps without solving the task. (3) \textit{Generalization}: Experiences about cleaning plates/knives/pans successfully transfer to cleaning cloth, demonstrating cross-object procedural generalization.

\begin{figure*}[t]
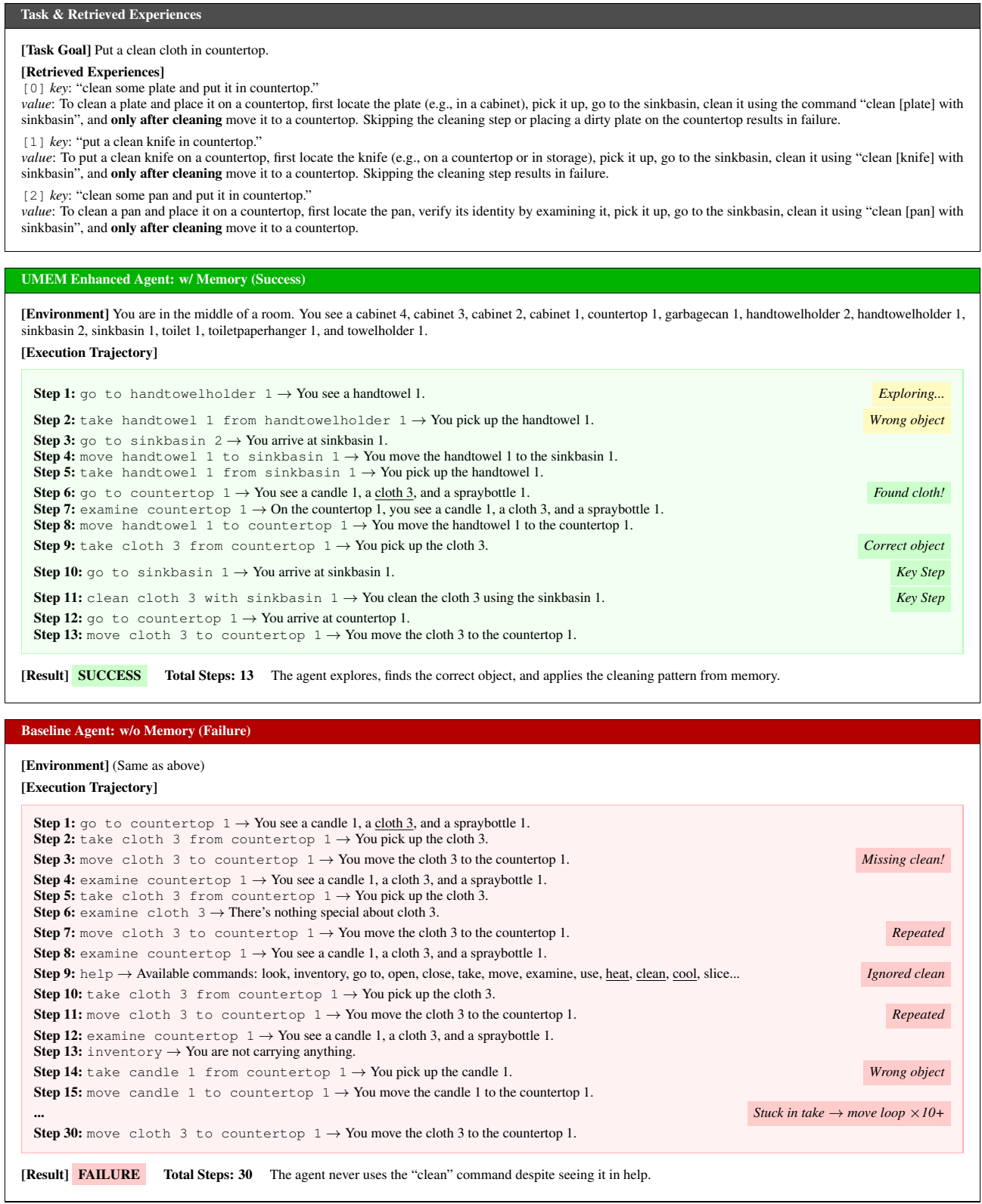

    \centering
    \scriptsize
    \begin{tcbraster}[
        raster columns=1, 
        raster row skip=8pt, 
        sharp corners, 
        boxrule=0.5pt,
        colback=white,
        colframe=black,
        fonttitle=\bfseries,
        left=5pt, right=5pt, top=5pt, bottom=5pt
    ]
        \begin{tcolorbox}[title=Task \& Retrieved Experiences, colbacktitle=gray!60!black]
        \textbf{[Task Goal]} Put a clean cloth in countertop.

        \smallskip
        \textbf{[Retrieved Experiences]} \\
        \texttt{[0]} \textit{key}: ``clean some plate and put it in countertop.'' \\
        \textit{value}: To clean a plate and place it on a countertop, first locate the plate (e.g., in a cabinet), pick it up, go to the sinkbasin, clean it using the command ``clean [plate] with sinkbasin'', and \textbf{only after cleaning} move it to a countertop. Skipping the cleaning step or placing a dirty plate on the countertop results in failure. \\[3pt]
        \texttt{[1]} \textit{key}: ``put a clean knife in countertop.'' \\
        \textit{value}: To put a clean knife on a countertop, first locate the knife (e.g., on a countertop or in storage), pick it up, go to the sinkbasin, clean it using ``clean [knife] with sinkbasin'', and \textbf{only after cleaning} move it to a countertop. Skipping the cleaning step results in failure. \\[3pt]
        \texttt{[2]} \textit{key}: ``clean some pan and put it in countertop.'' \\
        \textit{value}: To clean a pan and place it on a countertop, first locate the pan, verify its identity by examining it, pick it up, go to the sinkbasin, clean it using ``clean [pan] with sinkbasin'', and \textbf{only after cleaning} move it to a countertop.
        \end{tcolorbox}

        \begin{tcolorbox}[title=UMEM Enhanced Agent: w/ Memory (Success), colbacktitle=green!70!black]
        \textbf{[Environment]} You are in the middle of a room. You see a cabinet 4, cabinet 3, cabinet 2, cabinet 1, countertop 1, garbagecan 1, handtowelholder 2, handtowelholder 1, sinkbasin 2, sinkbasin 1, toilet 1, toiletpaperhanger 1, and towelholder 1.

        \smallskip
        \textbf{[Execution Trajectory]}
        \begin{tcolorbox}[colback=green!5, colframe=green!30, boxrule=0.5pt, sharp corners, left=3pt, right=3pt, top=3pt, bottom=3pt]
        \textbf{Step 1:} \texttt{go to handtowelholder 1} $\rightarrow$ You see a handtowel 1. \hfill \colorbox{yellow!30}{\textit{Exploring...}} \\
        \textbf{Step 2:} \texttt{take handtowel 1 from handtowelholder 1} $\rightarrow$ You pick up the handtowel 1. \hfill \colorbox{yellow!30}{\textit{Wrong object}} \\
        \textbf{Step 3:} \texttt{go to sinkbasin 2} $\rightarrow$ You arrive at sinkbasin 1. \\
        \textbf{Step 4:} \texttt{move handtowel 1 to sinkbasin 1} $\rightarrow$ You move the handtowel 1 to the sinkbasin 1. \\
        \textbf{Step 5:} \texttt{take handtowel 1 from sinkbasin 1} $\rightarrow$ You pick up the handtowel 1. \\
        \textbf{Step 6:} \texttt{go to countertop 1} $\rightarrow$ You see a candle 1, a \underline{cloth 3}, and a spraybottle 1. \hfill \colorbox{green!20}{\textit{Found cloth!}} \\
        \textbf{Step 7:} \texttt{examine countertop 1} $\rightarrow$ On the countertop 1, you see a candle 1, a cloth 3, and a spraybottle 1. \\
        \textbf{Step 8:} \texttt{move handtowel 1 to countertop 1} $\rightarrow$ You move the handtowel 1 to the countertop 1. \\
        \textbf{Step 9:} \texttt{take cloth 3 from countertop 1} $\rightarrow$ You pick up the cloth 3. \hfill \colorbox{green!20}{\textit{Correct object}} \\
        \textbf{Step 10:} \texttt{go to sinkbasin 1} $\rightarrow$ You arrive at sinkbasin 1. \hfill \colorbox{green!20}{\textit{Key Step}} \\
        \textbf{Step 11:} \texttt{clean cloth 3 with sinkbasin 1} $\rightarrow$ You clean the cloth 3 using the sinkbasin 1. \hfill \colorbox{green!20}{\textit{Key Step}} \\
        \textbf{Step 12:} \texttt{go to countertop 1} $\rightarrow$ You arrive at countertop 1. \\
        \textbf{Step 13:} \texttt{move cloth 3 to countertop 1} $\rightarrow$ You move the cloth 3 to the countertop 1.
        \end{tcolorbox}
        
        \textbf{[Result]} \colorbox{green!20}{\textbf{SUCCESS}} \quad \textbf{Total Steps: 13} \quad The agent explores, finds the correct object, and applies the cleaning pattern from memory.
        \end{tcolorbox}

        \begin{tcolorbox}[title=Baseline Agent: w/o Memory (Failure), colbacktitle=red!70!black]
        \textbf{[Environment]} (Same as above)

        \smallskip
        \textbf{[Execution Trajectory]}
        \begin{tcolorbox}[colback=red!5, colframe=red!30, boxrule=0.5pt, sharp corners, left=3pt, right=3pt, top=3pt, bottom=3pt]
        \textbf{Step 1:} \texttt{go to countertop 1} $\rightarrow$ You see a candle 1, a \underline{cloth 3}, and a spraybottle 1. \\
        \textbf{Step 2:} \texttt{take cloth 3 from countertop 1} $\rightarrow$ You pick up the cloth 3. \\
        \textbf{Step 3:} \texttt{move cloth 3 to countertop 1} $\rightarrow$ You move the cloth 3 to the countertop 1. \hfill \colorbox{red!20}{\textit{Missing clean!}} \\
        \textbf{Step 4:} \texttt{examine countertop 1} $\rightarrow$ You see a candle 1, a cloth 3, and a spraybottle 1. \\
        \textbf{Step 5:} \texttt{take cloth 3 from countertop 1} $\rightarrow$ You pick up the cloth 3. \\
        \textbf{Step 6:} \texttt{examine cloth 3} $\rightarrow$ There's nothing special about cloth 3. \\
        \textbf{Step 7:} \texttt{move cloth 3 to countertop 1} $\rightarrow$ You move the cloth 3 to the countertop 1. \hfill \colorbox{red!20}{\textit{Repeated}} \\
        \textbf{Step 8:} \texttt{examine countertop 1} $\rightarrow$ You see a candle 1, a cloth 3, and a spraybottle 1. \\
        \textbf{Step 9:} \texttt{help} $\rightarrow$ Available commands: look, inventory, go to, open, close, take, move, examine, use, \underline{heat}, \underline{clean}, \underline{cool}, slice... \hfill \colorbox{red!20}{\textit{Ignored clean}} \\
        \textbf{Step 10:} \texttt{take cloth 3 from countertop 1} $\rightarrow$ You pick up the cloth 3. \\
        \textbf{Step 11:} \texttt{move cloth 3 to countertop 1} $\rightarrow$ You move the cloth 3 to the countertop 1. \hfill \colorbox{red!20}{\textit{Repeated}} \\
        \textbf{Step 12:} \texttt{examine countertop 1} $\rightarrow$ You see a candle 1, a cloth 3, and a spraybottle 1. \\
        \textbf{Step 13:} \texttt{inventory} $\rightarrow$ You are not carrying anything. \\
        \textbf{Step 14:} \texttt{take candle 1 from countertop 1} $\rightarrow$ You pick up the candle 1. \hfill \colorbox{red!20}{\textit{Wrong object}} \\
        \textbf{Step 15:} \texttt{move candle 1 to countertop 1} $\rightarrow$ You move the candle 1 to the countertop 1. \\
        \textbf{...} \hfill \colorbox{red!20}{\textit{Stuck in take $\rightarrow$ move loop $\times$10+}} \\
        \textbf{Step 30:} \texttt{move cloth 3 to countertop 1} $\rightarrow$ You move the cloth 3 to the countertop 1.
        \end{tcolorbox}
        
        \textbf{[Result]} \colorbox{red!20}{\textbf{FAILURE}} \quad \textbf{Total Steps: 30} \quad The agent never uses the ``clean'' command despite seeing it in help.
        \end{tcolorbox}
    \end{tcbraster}

    \caption{Case study comparing UMEM enhanced agent and baseline.}
    \label{fig:case_study}
\end{figure*}

\section{Procedure for Evolutionary Memory Management}
\label{app:mem_mgmt}
\begin{algorithm}[t]
\small
\caption{UMEM Training: Semantic Neighborhood Modeling and GRPO}
\label{alg:umem}\KwIn{Query corpus $\mathcal{D}$, frozen executor $\mathcal{E}$, Mem-Optimizer $\pi_{\phi}$, Neighborhood size $N$, Group size $G$}
\KwOut{Trained parameters $\phi$ and evolved memory bank $\mathcal{B}$}\BlankLine
\textbf{Phase 1: Offline Semantic Neighborhood Modeling};
\ForEach{$q \in \mathcal{D}$}{
$\mathcal{N}_N(q) \leftarrow \text{Retrieve } N \text{ nearest neighbors for } q \text{ from } \mathcal{D} \setminus \{q\}$;
}\BlankLine
\textbf{Phase 2: GRPO-based Online Memory Evolution};
\For{each training step}{
Sample a mini-batch $\mathbf{Q} \subset \mathcal{D}$;
\ForEach{$q \in \mathbf{Q}$}{
$\tau_q \leftarrow \mathcal{E}(q, \mathcal{B})$;
\For{$g \leftarrow 1$ \KwTo $G$}{
$o^{(g)} \sim \pi_{\phi}(\cdot \mid q, \tau_q, \mathcal{B})$;
$r_f^{(g)} \leftarrow \mathbb{I}[\text{FormatOK}(o^{(g)})]$;
$\tilde{\mathcal{B}}^{(g)} \leftarrow \text{Apply } o^{(g)} \text{ to } \mathcal{B}$;
$r_g^{(g)} \leftarrow \frac{1}{|\mathcal{N}_N(q)|} \sum_{q' \in \mathcal{N}_N(q)} \text{UtilityGain}(q', \tilde{\mathcal{B}}^{(g)}, \mathcal{B})$;
$r^{(g)} \leftarrow r_f^{(g)} + r_g^{(g)}$;
}
Update $\phi$ via GRPO using group advantages $\{r^{(g)} - \text{mean}(r)\}_{g=1}^G$;
$\mathcal{B} \leftarrow \tilde{\mathcal{B}}^{(g^\star)}$ where $g^\star = \arg\max_g r^{(g)}$;
}
}
\Return{$\phi, \mathcal{B}$};
\end{algorithm}
Algorithm \ref{alg:umem} details the training process of UMEM, characterized by the co-evolution of the Mem-Optimizer $\pi_\phi$ and the memory bank $\mathcal{B}$. Prior to training, we perform \textbf{Semantic Neighborhood Modeling} to identify $\mathcal{N}_N(q)$ for each query $q$ based on embedding similarity, preventing shortcut learning. The Mem-Optimizer is then optimized through the following iterative stages:\begin{itemize}\item \textbf{(1) Memory-Augmented Execution}: The frozen executor $\mathcal{E}$ performs task $q$ using retrieved context from the current memory $\mathcal{B}$ to generate an initial trajectory $\tau_q$.\item \textbf{(2) Policy Rollout}: The Mem-Optimizer $\pi_\phi$ samples a group of $G$ candidate operations $\{o^{(g)}\}_{g=1}^G$ (e.g., \textsc{Add} or \textsc{Update}) based on $q$, $\tau_q$, and the retrieved memory.\item \textbf{(3) Marginal Utility Reward}: For each rollout, we compute a format reward $r_f$ for structural correctness and a marginal utility reward $r_g$, defined as the average performance gain (success rate and efficiency) across the semantic neighborhood $\mathcal{N}_N(q)$.\item \textbf{(4) Optimization via GRPO}: The policy $\pi_\phi$ is updated using group-relative advantages derived from the combined rewards $r_f + r_g$, facilitating stable policy refinement without a critic network.\item \textbf{(5) Online Memory Evolution}: The memory bank $\mathcal{B}$ is updated by committing the best-performing operation $o^{(g^\star)}$ from the group, ensuring the knowledge base evolves alongside the policy.\end{itemize}
\end{document}